\documentclass[a4paper, 11pt]{article}
\pdfoutput=1
\usepackage{latexsym,amsfonts, amsmath, amssymb, amsthm}
\usepackage[utf8]{inputenc}
\usepackage[final=true,protrusion=true,expansion=true]{microtype}
\usepackage[noadjust]{cite}
\usepackage{enumitem}
\usepackage{color}
\usepackage{mathtools}

\usepackage{booktabs}
\usepackage{multirow}

\usepackage{subcaption}
\usepackage[font=scriptsize,labelfont=bf]{caption}
\usepackage[affil-it]{authblk}

%
\usepackage{geometry}
\geometry{
a4paper,
total={160mm,247mm},
left=21.5mm,
right=21.5mm,
top=20mm,
}

\usepackage{hyperref}
\hypersetup{
	final,
    colorlinks=true,
    linkcolor=blue,
    filecolor=magenta,
    urlcolor=cyan,
}

\theoremstyle{plain}
\newtheorem{theorem}{Theorem}
\newtheorem{proposition}[theorem]{Proposition}
\newtheorem{lemma}[theorem]{Lemma}
\newtheorem{corollary}[theorem]{Corollary}

\theoremstyle{plain}
\newtheorem{definition}[theorem]{Definition}

\newtheorem{remark}[theorem]{Remark}





   %


\providecommand{\Dim}{\operatorname{dim}}            
\providecommand{\dim}{\Dim}
\providecommand{\Vol}{\operatorname{Vol}}   

\providecommand{\ker}{\operatorname{ker}}


\providecommand*{\dist}[2]{\operatorname{dist}({#1};{#2})}   
\providecommand*{\normaldist}[2]{\operatorname{m}({#1,#2})}   
\providecommand*{\normaldistnoargs}{\operatorname{m}}   





\renewcommand{\Im}{\operatorname{Im}}             
\providecommand{\argmin}{\operatorname*{argmin}}  
\providecommand{\Id}{\Op{Id}}                     









 %








\providecommand{\CA}{{\cal A}}

\providecommand{\CC}{{\cal C}}

\providecommand{\CM}{{\cal M}}
\providecommand{\CN}{{\cal N}}
\providecommand{\CO}{{\cal O}}
\providecommand{\CP}{{\cal P}}

\providecommand{\CZ}{{\cal Z}}



\providecommand{\bbE}{\mathbb{E}}

\providecommand{\bbN}{\mathbb{N}}

\providecommand{\bbR}{\mathbb{R}}






\providecommand*{\N}[1]{\left\|{#1}\right\|} 
\providecommand*{\textN}[1]{\|{#1}\|} 

\newcommand*{\SN}[1]{\left|{#1}\right|}      
\newcommand*{\textSN}[1]{|{#1}|}      






\newcommand*{\Op}[1]{\mathsf{#1}} 




















\usepackage{bbm} 

\usepackage{ifdraft}
\usepackage{arydshln}

\newcommand{\reach}{\tau_{\CM}}
\newcommand{\medial}{\textrm{Med}(\CM)}

\newcommand{\lreach}[1]{\tau_{\CM}{(#1)}}
\newcommand{\relu}[1]{\left({#1}\right)_+}
\newcommand{\boundParams}[1]{B{(	#1)}}
\newcommand{\boundParamsnoargs}{B}

\newcommand{\expmet}{p}

\makeatother

\title{
		\usefont{OT1}{bch}{b}{n}
		\huge A deep network construction that adapts to intrinsic dimensionality beyond the domain \\
}

\date{}
\author[1,2]{Alexander Cloninger\thanks{Email: \texttt{acloninger@ucsd.edu}}}
\author[1,3]{Timo Klock\thanks{Email: \texttt{timo@simula.no}}}
\affil[1]{University of California, San Diego, Department of Mathematics, San Diego, US}
\affil[2]{Hal\i c\i \v{g}lu Data Science Institute, University of California San Diego, US}
\affil[3]{Simula Research Laboratory, Machine Intelligence Department, Oslo, Norway}

\begin{document}

\maketitle
\begin{abstract}
We study the approximation of two-layer compositions $f(x) = g(\phi(x))$ via deep networks with ReLU activation,
where $\phi$ is a geometrically intuitive, dimensionality reducing feature map. We focus
on two intuitive and practically relevant choices for $\phi$: the projection onto a low-dimensional embedded
submanifold and a distance to a collection of low-dimensional sets.
We achieve near optimal approximation rates, which  depend only on the complexity
of the dimensionality reducing map $\phi$ rather than the ambient dimension.
Since $\phi$ encapsulates all nonlinear features that are material to the function $f$,
this suggests that  deep nets are faithful to an intrinsic dimension governed by $f$ rather than
the complexity of the domain of $f$.
In particular, the prevalent assumption of approximating functions on low-dimensional manifolds
can be significantly relaxed using functions of type $f(x) = g(\phi(x))$ with $\phi$ representing an orthogonal projection onto the same manifold.
\end{abstract}

{\noindent\small{\textbf{Keywords:} deep neural networks, approximation theory, curse of dimensionality, composite functions, noisy manifold models}}

\section{Introduction}
\label{sec:introduction}
In the past decade neural networks emerged as powerful tools to construct state-of-the-art solutions
for various different data analysis tasks. Much of this progress is of
empirical nature and can not be explained by current mathematical theory. This led to a re-emerging interest
for developing a theoretical understanding of deep networks in recent years. In this work we contribute
to the effort by studying the approximative capacity of deep networks with respect to practically motivated composite
function classes in the high-dimensional regime.

Approximation properties of shallow and deep networks have been studied for over three decades and
gained much traction  during the rise of neural networks around the 80s and 90s
\cite{mhaskar1996neural, mhaskar1993approximation, leshno1993multilayer, cybenko1989approximation,hornik1989multilayer}.
It is well-known that shallow networks (with non-polynomial activation) are universal approximators,
which means they can approximate any continuous function
on a compact subset of $\bbR^D$ arbitrarily well \cite{leshno1993multilayer, cybenko1989approximation,hornik1989multilayer}.
Furthermore, it has been established that the number of required nonzero network parameters
for uniformly approximating a $\CC^{\alpha}$-function  to accuracy $\varepsilon$ on a compact subset of $\bbR^D$ is in $\CO(\varepsilon^{-D/\alpha})$ \cite{mhaskar1996neural,pinkus1999approximation}.
Similar results hold for deep networks with the
additional benefit that the approximation can be localized, contrary to approximation via shallow networks \cite{mhaskar1993approximation,chui1994neural,chui1996limitations}.

In modern networks differentiable sigmoidal activation functions are often replaced
by the recitified linear unit activation (ReLU), because such networks do not suffer
the vanishing gradient problem and can thus be more easily trained via backpropagation
\cite{goodfellow2016deep}. Approximation properties of ReLU networks received much attention
in recent years \cite{shaham2018provable,yarotsky2017error,petersen2018optimal,telgarsky2017neural,
yarotsky2018optimal,boelcskei2019optimal,grohs2019deep,shen2019nonlinear}.
The bottom line is that ReLU networks are at least as expressive as networks with differentiable sigmoidal activation.
Moreover, a series of recent works \cite{zhou2020universality,zhou2020theory,fang2020theory} shows that this is also true
for deep convolutional ReLU networks, which are significantly less flexible compared to fully-connected networks.
To comply with modern neural network practice, we concentrate on the ReLU activation in this work, though we emphasize that
we have no reason to believe our results are special to this choice.

Approximating functions either through  differentiable sigmoidal networks or ReLU networks suffers from the curse
of dimensionality, because the number of required parameters for approximating $f \in \CC^{\alpha}$ on a compact subset of $\bbR^D$
is exponential in $D$.
Since  high-dimensional problems are ubiquitous in applied areas,
it is of great interest to identify narrower but sufficiently rich
function classes that allow for faster approximation rates with at most polynomial dependency on $D$.

Three decades ago, the author of \cite{barron1993universal} showed that functions $f$,
whose Fourier transform $\hat f$ satisfies
\begin{align*}
C_f =  \int_{\bbR^D} \SN{\omega \hat f(\omega)}d\omega < \infty,
\end{align*}
can be approximated by a shallow network to accuracy $\varepsilon$ using just
$\CO(\varepsilon^{-2})$ neurons.
Functions satisfying such conditions are said to be of Barron-type and they are
under continuous investigation ever since \cite{barron1994approximation,klusowski2016uniform,montanelli2019deep}.
Unfortunately, the constant involved in $\CO(\varepsilon^{-2})$ depends on $C_f$, which in turn
increases exponentially with the dimension $D$ under standard
regularity assumptions alone. Several works \cite{mhaskar2004tractability,kurkova2001bounds,kurkova2002comparison} have subsequently investigated conditions on $f$ that imply
the growth of $C_f$ is at most polynomial in $D$.

In  \cite{poggio2015theory,poggio2017and,mhaskar2016learning,mhaskar2016deep,mhaskar2017and,schmidt2017nonparametric}
the benefit of depth of networks has been analyzed by studying approximation properties of deep nets
for compositional functions of the type $f(x) = g_L \circ \ldots \circ g_1(x)$.
Intuitively, if all intermediate functions $g_{\ell} : \bbR^{\ell-1}\rightarrow \bbR^{\ell}$
are easier to approximate than the final target $f$, deep networks can approximate $f$ more efficiently by mimicking
the compositional structure of the function. This situation arises, for instance, if
each component $g_{\ell,p}:\bbR^{\ell-1}\rightarrow \bbR$, $p =1,\ldots,\ell$, depends on at most
$k$ of the $\ell-1$ coordinates of the previous output, i.e., can be written as
$\tilde g_{\ell,p}(I_{\ell,p}(x)) = g_{\ell,p}(x)$ for a map $I_{\ell,p} : \bbR^{\ell-1}\rightarrow \bbR^k$
that selects $k$ coordinates, independently of $x$. In this case, assuming all components
$g_{\ell,p},p=1,\ldots,\ell,\ \ell = 1,\ldots,L$ are $\alpha$-H\"older, the function $f$ can be
approximated uniformly up to error $\varepsilon$ using $\CO(\varepsilon^{-k/\alpha})$
nonzero parameters (here, $L$ is treated as a constant).
The missing dependence on $D$ in the exponent show that compositions pave a way for defining classes of functions
that are narrow enough to avoid the curse of dimensionality \cite{mhaskar2016deep,poggio2017and,mhaskar2019function}. This led to the notion of `blessing of compositionality'
as a cure to the curse of dimensionality.

Another line of research, which is motivated by the popularity of nonlinear dimension reduction methods,
studies approximation of $f : \CM \subseteq [0,1]^D \rightarrow \bbR$ on low-dimensional
domains $\CM$, such as a $d$-dimensional embedded submanifold. The authors of \cite{shaham2018provable} established
that uniform approximations to accuracy $\varepsilon$ require just $\CO(\varepsilon^{-d/\alpha})$ parameters, replacing
the ambient dimension $D$ with the intrinsic manifold dimension $d$. Similar results have been shown in
\cite{chui2018deep, chen2019efficient, schmidt2019deep} and extended to more general notions of dimensionality
or other types of neural networks \cite{nakada2019adaptive,mhaskar2020dimension,mhaskar2020direct}, including radial basis function networks and
abstract generalizations thereof. Therefore, certain approximation systems, including
deep networks, adapt to the intrinsic dimension of the domain of the target.

Approximation on low-dimensional domains is appealing because it is geometrically
intuitive and can, to some extent, be checked in practice by analyzing local
covariance matrices of a given data set. However, defining the complexity of an approximation task
via the domain of the target has some significant drawbacks, which we highlight in the next section.

\subsection{Drawbacks of measuring complexity by the target domain}
\label{subsec:motivation}

\paragraph{Noisy manifold hypothesis}
Many theoretical results that alleviate the curse of dimensionality
are based either explicitly or implicitly on the exact manifold hypothesis, which states
that data is supported on a low-dimensional manifold.
In view of usually noisy real-world data, the exact manifold hypothesis
seems overly stringent and in fact has been criticized for being rarely observable
in practice \cite{hein2007manifold,hein2007manifoldb}. A more realistic alternative
is to model real-world data as a sum of clean data, which is supported on a low-dimensional manifold $\CM$
(think of the `face manifold' consisting of images of faces \cite{he2005face}), plus noise, which generically pushes data points off the clean data manifold.
If the noise is unstructured, we can simplistically assume that it concentrates in
the local normal space of $\CM$, and we may associate to $x \in \bbR^D$
the orthogonal projection $\pi_{\CM}(x) = \argmin_{z \in \CM}\N{x-z}_2$ as the clean data sample. We now aim for approximating functions
$f(x) = g(\pi_{\CM}(x))$, where $g : \CM \rightarrow \bbR$ describes a function of
interest defined on clean data. See Figure \ref{fig:motivation_1} for an illustration of the setting.

Following results in \cite{shaham2018provable, chen2019efficient, schmidt2019deep} about approximation over low-dimensional domains, we are tempted to think
there is a significant difference between approximating a function $g : \CM \rightarrow \bbR$ on $\CM$ or
a function $f(x) = g(\pi_{\CM}(x))$
on a full-dimensional tubular domain around $\CM$. We will prove that,
in fact, both functions are approximable with similarly sized networks and by using the same
amount of information about the target $f$.

We add that the stringency of the exact manifold hypothesis is often recognized and discussed in the literature.
For instance, the authors of \cite{chui2018deep} explain that their approximation results are robust to an inexact manifold hypothesis, because
noise that spreads only in  $s\ll D$ directions in the local normal space increases the dimensionality
of the data manifold to just $d+s\ll D$. Furthermore, \cite{mhaskar2020direct} proposes a
Hermite polynomial based
approximation scheme for functions on manifolds, which is
robust to a degree of off-manifold noise. The theory in \cite{cheng2019classification} includes off-manifold noise
under the assumption that the noise vanishes exponentially fast with increased distance from the manifold.

\paragraph{Adaptivity to function complexity}
The same argument as in the previous paragraph can be made when approximating a function that just depends on a lower dimensional set
of linear or nonlinear transformations of the input, as is common
in the sufficient dimension reduction literature \cite{li2018sufficient}. To give a simple example,
we may consider the swiss role manifold $\CM$ as in Figures \ref{fig:motivation_21}-\ref{fig:motivation_22}, where the colors indicate
values of two different Lipschitz-continous functions. Based on previously mentioned approximation results \cite{shaham2018provable, chen2019efficient, schmidt2019deep},
both functions can be approximated using deep networks with $\CO(\varepsilon^{-1/\dim(\CM)}) = \CO(\varepsilon^{-1/2})$ parameters.
However, the complexity of functions in \ref{fig:motivation_21} and \ref{fig:motivation_22} differs,
because we can express $f$ in \ref{fig:motivation_21} as $f(x) = g(\pi_{\gamma}(x))$,
where $\gamma$ is a one-dimensional manifold.
In other words, there exists a submanifold $\gamma \subset \CM$ with $\dim(\gamma) = 1$ that contains
all material information for recovering the target function $f$.

\begin{figure}
\begin{subfigure}{.49\textwidth}
  \centering
  \includegraphics[width=0.5\linewidth]{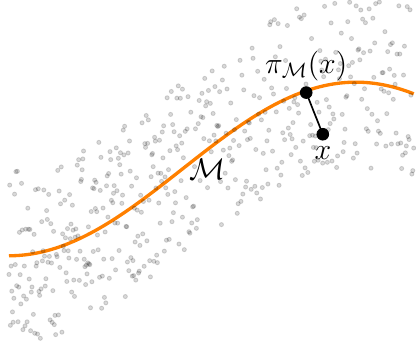}
  \caption{}
  \label{fig:motivation_1}
\end{subfigure}
\begin{subfigure}{.49\textwidth}
  \centering
  \includegraphics[trim=2.5cm 1.5cm 2.5cm 2cm, clip, width=0.7\linewidth]{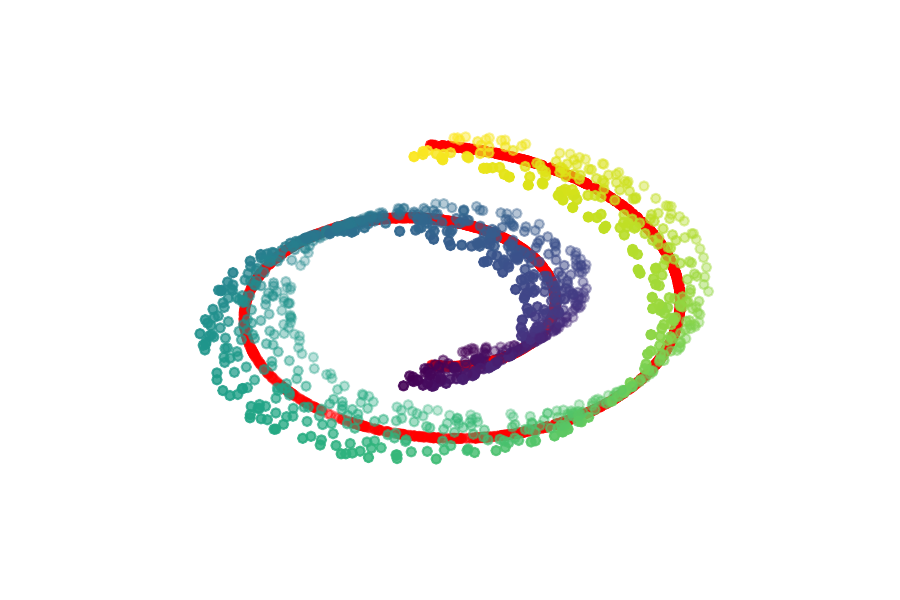}
  \caption{}
  \label{fig:motivation_21}
\end{subfigure}
\begin{subfigure}{.49\textwidth}
  \centering
  \includegraphics[trim=2.5cm 1.5cm 2.5cm 2cm, clip, width=0.7\linewidth]{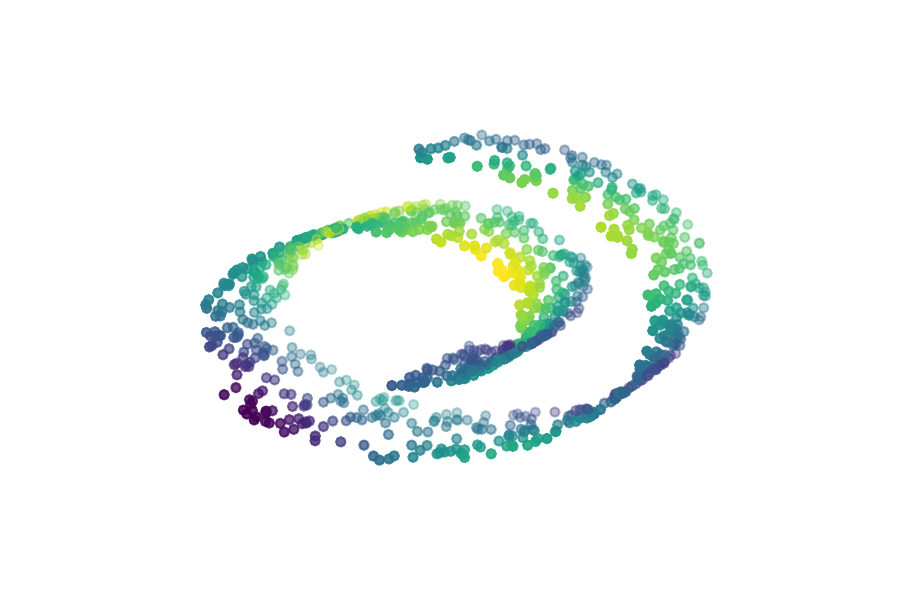}
  \caption{}
  \label{fig:motivation_22}
\end{subfigure}
\begin{subfigure}{.49\textwidth}
  \centering
  \includegraphics[width=.5\linewidth]{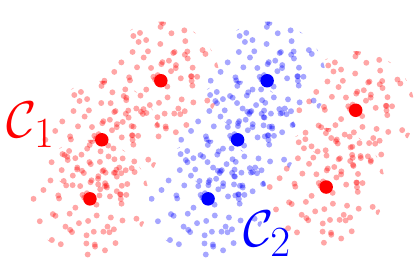}
  \caption{}
  \label{fig:motivation_3}
\end{subfigure}
\caption{
Examples highlighting drawbacks of defining the approximation complexity via the target domain.
In \ref{fig:motivation_1} the target function depends just on the projection of the input onto a low-dimensional manifold,
yet the data is spread in a full-dimensional subset of $\bbR^D$. \ref{fig:motivation_21} - \ref{fig:motivation_22} show
two functions whose domain is the swiss role, but which are of different complexity because the function in
\ref{fig:motivation_21} just depends on a single nonlinear transformation of the data (the red curve).
\ref{fig:motivation_3} shows a classification problem where labels are assigned based on the proximity to a
few class attractors (bold dots). In all three cases the dimensionality of the approximation
domain is not a suitable measure for the difficulty of the approximation problem.
}
\label{fig:motivation}
\end{figure}

\paragraph{Classification problems with class attractors}
Another example, where the domain of the target is not a suitable
measure of complexity, are classification problems with class attractors,
see Figure \ref{fig:motivation_3}. Here, we assume that the class label
depends only on the proximity
of the input to a low-dimensional attractor set, such as for instance a finite set of points.
Hence, if we were aware of the attractor set,
the target function is completely determined by evaluating the distance to the set,
indicating that the complexity of the target function is dictated by the complexity
of the distance metric and the set of attractors, rather than the domain of the target.
Classification problems, where a finite number of attractors exist, are the main
object of study in few-shot learning, see for instance \cite{sung2018learning}. In these problems
the goal is to predict class labels after querying a tiny amount of samples, which are ideally points that
serve as class attractors with respect to a, possibly prescribed, metric.

\subsection{Contribution}
\label{subsec:setting}
Our main goal is to extend approximation guarantees of deep nets
from functions defined on low-dimensional domains to functions that encode
low-dimensionality in the joint input-output relation $x \mapsto f(x)$.
We study two classes of functions, which resemble two layer composite functions $f(x) = g(\phi(x))$,
where $\phi(x)$ takes the role of a geometrically intuitive, dimensionality reducing feature map.
By resorting to such a function-driven notion of low-complexity,
we alleviate the drawbacks raised in the previous section.

\paragraph{Functions of projections to low-dimensional sets}
We first consider functions that model $\phi$ as an orthogonal projection
onto a $d$-dimensional Riemannian submanifold $\CM \subseteq [0,1]^D$.
In this case we can write the target $f : \CA \subseteq [0,1]^D \rightarrow \bbR$ as
\begin{align}
\label{eq:intro_model_projection}
f(x) = g(\pi_{\CM}(x))\quad \textrm{where}\quad \pi_{\CM}(x) \in \argmin_{z \in \CM}\N{x-z}_2,
\end{align}
and the approximation domain $\CA$ is assumed to be contained in a tubular region around $\CM$.
The width of this region is constrained
to guarantee that $\pi_{\CM}(x)$ is Lipschitz-continuous, as described in detail in Section \ref{sec:projections_onto_manifolds}.
We refer to the associated function class as Class 1 below.

Assumption \eqref{eq:intro_model_projection} naturally includes the popular
case $\CA = \CM$ and $\pi_{\CM} = \Id$, which has been studied in
\cite{shaham2018provable,chui2018deep, chen2019efficient, schmidt2019deep,nakada2019adaptive,mhaskar2020dimension,mhaskar2020direct}.
In the present case the approximation domain $\CA$ does however not need to be
low-dimensional. Rather, Equation \eqref{eq:intro_model_projection} imposes
that $f$ is locally constant in $D-d$ directions, corresponding
to the local normal space of $\CM$. If we were able to extract a subset of the approximation space $A\subseteq \CA$,
whose projection $\pi_{\CM}(A)$ is supported on a small patch of the manifold $\CM$ so that curvature effects of $\CM$ are negligible, we can
view $f|_A$ as a constant function with optimal regularity in $D-d$ directions
corresponding to the local normal space, and regularity dictated by $g|_{\pi_{\CM}(A)}$
in the remaining $d$ directions. Following this intuition, our viewpoint is aligned with recent
work on approximation of functions in anisotropic Besov spaces \cite{suzuki2018adaptivity,suzuki2019deep}.
%
%
%
%

\textbf{Contribution} We achieve the same approximation guarantee that is achieved in \cite{shaham2018provable,schmidt2019deep,nakada2019adaptive}
for the case $\CA = \CM$. Namely, if $\CM$ is a $d$-dimensional manifold satisfying
some common regularity assumptions and $g$ is $\alpha$-H\"older with respect to the geodesic metric
on $\CM$, functions of Class 1 can be approximated uniformly to accuracy
$\varepsilon$ using a deep ReLU network based on $\CO(\varepsilon^{-d/\alpha})$ point queries of $f$ and with $\CO(\log(D)D\log^2(\varepsilon^{-1})\varepsilon^{-d/\alpha})$
nonzero parameters arranged in $\CO(\log(D)\log^2(\varepsilon^{-1}))$ layers. The result is
optimal in terms of the number of required function queries according to nonlinear width theory \cite{devore1989optimal},
and optimal (apart from logarithmic factors) in terms of the required network dimensions \cite[Theorem 1]{yarotsky2018optimal}.
We believe the result sheds a new light on the relevance of the manifold hypothesis, because we identify
local invariances encoded in $x\mapsto f(x)$ as the key factor to simplify the approximation problem,
as opposed to the complexity of the underlying data manifold.

\paragraph{Functions of distances to low-dimensional sets}
Second, we study functions that depend only on distances to a collection of finite or low-dimensional sets
$\CC_1,\ldots,\CC_{M}$. Mathematically, we assume $f : [0,1]^D\rightarrow \bbR$ can be written as
\begin{align}
\label{eq:intro_model_distance}
f(x) = \sum_{\ell=1}^{M}g_{\ell}\left(\min_{z \in \CC_{\ell}}\normaldist{x}{z}^p\right),
\end{align}
where $\normaldist{\cdot}{\cdot}$ is a metric and $p \in \bbN$ can be an arbitrary scalar, which makes
$\normaldist{\cdot}{\cdot}^p$ efficiently approximable by deep neural networks (think
of $\normaldist{\cdot}{\cdot}^p = \N{\cdot - \cdot}_{p}^p$, which is a polynomial of degree $p$
in the coordinates and thus efficiently approximable, see Lemma \ref{lem:relu_lp_norm}).
For functions satisfying \eqref{eq:intro_model_distance}, low-dimensionality will be encoded
by assuming that packings of $\CC_1,\ldots,\CC_M$ at scale $\varepsilon$ with respect to $\normaldist{\cdot}{\cdot}$ have cardinality $\CO(\varepsilon^{-d})$.
This morally says $\CC_1,\ldots,\CC_M$ are $d$-dimensional submanifolds, though we do not require
any regularity about $\CC_\ell$ and we also cover the case $d = 0$. The associated function class is referred to
as Class 2 below.

\textbf{Contribution}
For $\alpha$-H\"older smooth $g_1,\ldots,g_M$,
we show that functions of type \eqref{eq:intro_model_distance} can be
uniformly approximated to accuracy $\varepsilon$  with ReLU nets based on $\CO(\varepsilon^{-\alpha})$ queries from each $g_1,\ldots,g_M$
and with $\CO\left(\log(\varepsilon^{-1})\varepsilon^{-\min\{1,d\}/\alpha )} +
\varepsilon^{-d/\alpha} P_{\normaldistnoargs}(\varepsilon^{1/\alpha})\right)$ nonzero network parameters.
Here, $P_{\normaldistnoargs}(\varepsilon)$ describes
the number of nonzero parameters required to uniformly approximate $\normaldist{\cdot}{\cdot}^p$ to accuracy $\varepsilon$.
If the metric can be efficiently approximated by a deep net, e.g., by bounding $P_{\normaldistnoargs}(\varepsilon) \in \CO(D\log(D)\log(\varepsilon^{-1}))$
such as in the case $\normaldist{\cdot}{\cdot}^p = \N{\cdot - \cdot}_{p}^p$,
we require in total $\CO(D\log(D)\varepsilon^{-\min\{1,d\}/\alpha})$ parameters in the network.
For $d \leq 1$, which corresponds to the situation in Figure \ref{fig:motivation_3},
the associated requirement $\CO(D\log(D)\log(\varepsilon^{-1})\varepsilon^{-1/\alpha})$ is
comparable to approximating a univariate function with a shallow or deep network \cite{mhaskar1996neural,yarotsky2017error,yarotsky2018optimal}.
Similarly, the number of required function queries $\CO(\varepsilon^{-\alpha})$ per $g_i,\ i=1,\ldots,M$, matches
the minimal number of queries needed to approximate an arbitrary $\alpha$-H\"older univariate functions
according to nonlinear width theory \cite{devore1989optimal}.

\subsection{Organization of the paper}
Section \ref{sec:projections_onto_manifolds} rigorously introduces functions of type \eqref{eq:intro_model_projection}
and presents the corresponding approximation guarantee. Section \ref{sec:approximation_distance_operator}
does the same for functions of type \eqref{eq:intro_model_distance}. Section \ref{sec:summary_statitics}
presents implications of our results to nonparametric estimation problems. Section \ref{sec:preliminaries}
introduces preparatory material about ReLU calculus and Sections
\ref{subsec:proofs_approximation_projection_new} and \ref{subsec:proof_approximation_distance} present the proofs of our main
results.
We conclude in Section \ref{sec:conclusions}. Section \ref{sec:appendix} in the Appendix
contains some additional statements and proofs about differential geometry and ReLU approximation theory.

\subsection{Notation}
\label{notation}
For $N \in \bbN$ we let $[N] := \{1,\ldots,N\}$.
$\textrm{cl}(B)$ denotes the closure of a set $B$ and $\Im(M)$ denotes the image
of an operator $M$.
$\SN{A}$ denotes the absolute value if $A \in \bbR$, the length if $A$ is an interval,
and the cardinality if $A$ is a finite set.
We denote $a \vee b = \max\{a,b\}$ and $a \wedge b = \min\{a,b\}$.
The ReLU activation function is denoted $\relu{t} = \max\{0,t\}$.

$\N{\cdot}_p$
denotes the standard Euclidean $p$-norm for vectors and $\N{\cdot}_2$ denotes the
spectral norm for matrices. We denote $\dist{z}{A} := \inf_{p \in A}\N{z - p}_2$ for $z \in \bbR^D$ and $A \subset \bbR^D$.
$B_{r}(x)$ denotes the standard $\N{\cdot}_2$-ball of radius $r$ around $x$, while $B_{\CM,r}(v)$ denotes
the geodesic ball on a manifold $\CM$ of radius $r$ around $v$. $\N{A}_0$ counts the number of nonzero entries
of a matrix $A$. $L_p(A)$ contains function with finite $p$-th order Lebesgue norm. 

We use $A \lesssim B$, respectively, $A\gtrsim B$, if there exists a
uniform constant $C$ such that $A\leq C B$, respectively $A\geq C B$. Furthermore, we
write $A\asymp B$ if $A\lesssim B$ and $A \gtrsim B$.

Finally, we define the ReLU activation function $\relu{t} = 0 \vee t = \max\{0,t\}$ and introduce
the following definition of a deep ReLU network.
\begin{definition}[{\cite[Definition 2.1]{grohs2019deep}}]
\label{def:relu_nets}
Let $L \geq 2$ and $N_0,\ldots,N_L \in \bbN_{>0}$.
A map $\Phi : \bbR^{N_0} \rightarrow \bbR^{N_{L}}$ is called a ReLU network if there
exist matrices $A_{\ell} \in \bbR^{N_{\ell}\times N_{\ell - 1}}$
and vectors $b_{\ell} \in \bbR^{N_{\ell}}$ for $\ell \in [L]$ so that
$\Phi(x) = W_{L}y_{L-1} + b_{L}$, where $y_{\ell}$ is recursively defined by $y_0 := x$ and
\begin{align*}
y_{\ell} := \relu{A_{\ell}y_{\ell-1} + b_{\ell}}\quad \textrm{for} \quad \ell \in [L-1].
\end{align*}
Furthermore, we define $L(\Phi) := L$ as the number of layers, $W(\Phi) := \max_{\ell = 0,\ldots,L}N_{\ell}$
as the maximum width, $P(\Phi) := \sum_{\ell=1}^{L} \N{A_{\ell}}_0 + \N{b_{\ell}}_0$ as the number of nonzero parameters,
and
$$
\boundParams{\Phi} := \max\{\SN{(b_{\ell})_i}, \SN{(A_{\ell})_{ij}} : i \in N_{\ell}, j \in N_{\ell - 1}, \ell \in [L]\}
$$
as a bound for the absolute value over all parameters.
\end{definition}

\section{Main result: projection-based target functions}
\label{sec:projections_onto_manifolds}
In this section we rigorously introduce projection-based functions as foreshadowed in \eqref{eq:intro_model_projection}
and we present the corresponding approximation guarantee. Before doing so, we introduce some
well-known preparatory concepts from differential geometry. These are also summarized in Table \ref{tab:notation}.

\paragraph{Preparatory material from differential geometry}
\begin{table}[t]
\begin{center}
\scriptsize
\begin{tabular}{@{}ccc@{}}
      symbol & description  \\ \toprule
$\CM$ & a connected compact $d$-dimensional Riemannian submanifold of $\bbR^D$ \\
$d$ & dimension of the manifold $\CM$\\
$\pi_{\CM}$ & orthogonal projection $\pi_{\CM}(x) = \argmin_{z \in \CM}\N{x-z}_2$ \\
$\medial$ & medial axis of $\CM$, i.e. set with non-unique projections $\pi_{\CM}(x)$ \\
$A(v)$ & $D\times d$ matrix containing columnwise orthonormal basis for the tangent space $\CM$ at $v$ \\
$\lreach{v}$ & local reach at $v \in \CM$, i.e. distance to travel in $\Im(A(v))^\perp$ to reach $\medial$ \\
$\reach$ & infimum over all local reaches, throughout assumed positive\\
$\CM(q)$ & tube of radius $q \in [0,1)$ times local reach around $\CM$, see \eqref{eq:blow_up_of_manifold}\\
$d_{\CM}(v,v')$ & geodesic metric on $\CM$\\
$d_{T}(v,v')$ & geodesic metric on $\CM$ extended to $T\supseteq \CM$ by $d_{T}(x,x'):=d_{\CM}(\pi_{\CM}(x),\pi_{\CM}(x'))$\\
$B_{\CM,r}(v)$ & geodesic ball of radius $r$ around $v\in \CM$\\
$\Vol(\CM)$ & volume of the manifold $\CM$\\
$\CP(\delta,\CC,\Delta)$ & $\delta$-packing number of a set $\CC$ with respect to metric $\Delta$\\
\bottomrule
\end{tabular}
\end{center}
\caption{Notations used for different geometrical concepts throughout the paper}
\label{tab:notation}
\end{table}
Let $\CM\subseteq \bbR^D$ be a nonempty, connected, compact, $d$-dimensional Riemannian submanifold.
A manifold $\CM$ has an associated medial axis
\begin{align}
\label{eq:medial_axis}
\medial := \left\{ x \in \bbR^D : \exists p \neq q \in \CM,\ \N{p-x}_2 = \N{q-x}_2 = \dist{x}{\CM}\right\},
\end{align}
which contains all points $x \in \bbR^D$ with set-valued orthogonal projection $\pi_{\CM}(x) = \argmin_{z \in \CM}\N{x-z}_2$.
The local reach (sometimes called local feature size \cite{boissonnat2014manifold})
is defined by
\begin{align}
\label{eq:local_reach}
\lreach{v} := \dist{v}{\medial}
\end{align}
and describes the minimum distance needed to travel from a point $v \in \CM$ to the closure
of the medial axis. The smallest local reach $\reach := \inf_{v \in \CM}\lreach{v}$
is called reach of $\CM$.

Another important concept, which we use in the following, is the geodesic metric.
Since compact Riemannian manifolds are geodesically complete by the  Hopf-Rinow theorem, there exists a length-minimizing
geodesic $\gamma : [t,t'] \rightarrow \CM$ between any two points $\gamma(t) = v$ and $\gamma(t')=  v'$,
where the length is defined by $\SN{\gamma} = \int_{t}^{t'}\N{\dot\gamma(s)}_2 ds$. The geodesic metric on $\CM$ is defined as
\begin{align}
\label{eq:geodesic_metric}
d_{\CM}(v,v') := \inf\{\SN{\gamma} : \gamma \in \CC^1([t,t']),\ \gamma: [t,t']\rightarrow \CM,\ \gamma(t) = v,\ \gamma(t') = v'\}.
\end{align}
We can extend $d_{\CM}$ to tubular regions $T\supseteq \CM$ around $\CM$ by
$d_{T}(x,x') := d_{\CM}(\pi_{\CM}(x),\pi_{\CM}(x'))$, provided the orthogonal projection $\pi_{\CM}$ is uniquely defined for $x,x' \in T$.

\paragraph{Main result}
We are now interested in approximating functions of the type  $f = g\circ \pi_{\CM}$. To state
the function class in rigorous terms, we define the set
\begin{align}
\label{eq:blow_up_of_manifold}
\CM(q) := \left\{ x \in \bbR^D : x = v + u,\ v \in \CM,\ u \in \ker(A(v)^\top),\ \N{u}_2 < q\lreach{v}\right\},
\end{align}
where the columns of $A(v) \in \bbR^{D\times d}$ represent an orthonormal basis of the tangent space of $\CM$ at $v$.
The set $\CM(q)$ represents a tubular region around the manifold $\CM$ with local tube radius
$q\lreach{v}$, where $\lreach{v}$ is the local reach as defined in \eqref{eq:local_reach}. Since $\lreach{v} \geq \reach$ for all $v \in \CM$,
$\CM(q)$ contains, for instance, the tube of constant radius $q\reach$ around $\CM$. However,
in regions where $\CM$ has small curvature, the tube radius may also be significantly larger due to its scaling with the local reach.

The class of projection-based functions is defined as follows.
\begin{enumerate}[leftmargin=1.5cm, label=Class \arabic*]
\item\label{enum:model_1}
The target $f : \CA \subseteq [0,1]^D \rightarrow \bbR$ can be written as $f(x) = g(\pi_{\CM}(x))$ for a connected, compact, nonempty, $d$-dimensional manifold $\CM$ with $\reach > 0$, $\CA \subseteq \CM(q)\subseteq [0,1]^D$ for some $q \in [0,1)$, and where $\pi_{\CM}(x) := \argmin_{z \in \CM}\N{x-z}_2$.
The function $g:\CM \rightarrow [0,1]$ is $\alpha$-H\"older with H\"older constant $L$, i.e., satisfies for $\alpha \in (0,1]$ and $L \geq 0$
\begin{equation}
\label{eq:holder_continuity}
\SN{g(v) - g(v')} \leq L d_{\CM}^{\alpha}(v,v') \quad \textrm{for all}\quad v,v' \in \CM.
\end{equation}
\end{enumerate}

The condition $\CA \subseteq \CM(q)$ for some $q<1$  is important because it is a necessary for $f$
to inherit smoothness properties from $g$. Namely, if $\CA$ intersects the medial
axis $\medial$, see the definition in \eqref{eq:medial_axis}, the projection $\pi_{\CM}$ is
not uniquely defined over $\CA$ and, as a consequence, $f$ may not be well-defined as well.
If  $\CA \cap \medial = \emptyset$ but $\dist{\CA}{\medial} = 0$,
$f$ might be well-defined and continuous on $\CA$, but we can not expect $f$ to be locally H\"older-continuous
at points arbitrarily close to the medial axis. As shown in the following Lemma, enforcing $\CA \subseteq \CM(q)$
for some $q < 1$ solves these issues and implies that $f$ inherits $\alpha$-H\"older regularity of
$g$ with a H\"older constant equal to the product of the H\"older constant of $g$ and $(1-q)^{-1}$.

\begin{lemma}
\label{lem:combined_unique_projection_lipschitz_property}
Consider a connected, compact, $d$-dimensional Riemannian submanifold of $\CM \subseteq \bbR^D$ with $\reach > 0$
and let $q \in [0,1)$.

\noindent
1) If $x \in \CM(q)$ has decomposition $x = v + u$ for $v \in \CM$ and $u \in \ker(A(v)^\top)$
with  $\N{u}_2 < q \lreach{v}$, then $\pi_{\CM}(x)$ is uniquely determined by $\pi_{\CM}(x)=v$.

\noindent
2) The projection $\pi_{\CM}$ satisfies $\N{\pi_{\CM}(x) - \pi_{\CM}(x')}_2 \leq (1-q)^{-1}\N{x-x'}_2$
for all $x,x' \in \CM(q)$.
\end{lemma}
\begin{proof}
The proof is deferred to Section \ref{subsec:proofs_sec_2} in the Appendix.
\end{proof}

We can now present our main approximation guarantee.
\begin{theorem}
\label{thm:approximating_function_projection}
Let $f$ be of \ref{enum:model_1} and define  $C_{\CM} := \textrm{Vol}(\CM)d^{d/2}$,
$C_{q} := C_d d^{d/2}(1-q)^{-2d}$, where $C_d$ is the volume of the Euclidean unit ball in $\bbR^{d}$.
For $\varepsilon \in (0, \reach/2)$ there exists a ReLU network $\Phi$, which uses
$n \lesssim C_{\CM}\varepsilon^{-d}$ point queries of $f$ and has its dimensions bounded according to
$\boundParams{\Phi} \lesssim \varepsilon^{-2}$, $W(\Phi)\lesssim DC_{\CM}\varepsilon^{-d}$,
and
\begin{equation}
\label{eq:achitecture_bounds_main}
\begin{aligned}
L(\Phi)&\lesssim C_{q}^4 \log^2\left(\frac{C_{q}}{\varepsilon^{\alpha}}\right) + \log\left(\frac{DC_{q}C_{\CM}}{\reach^2 \varepsilon^{3+d}}\right),\\
P(\Phi)&\lesssim C_{q}^4 C_{\CM}\varepsilon^{-d}\log^2\left(\frac{C_{q}}{\varepsilon^{\alpha}}\right) + D\varepsilon^{-d}\log\left(\frac{DC_{q}C_{\CM}}{\reach^2\varepsilon^{3+d}}\right),
\end{aligned}
\end{equation}
such that
\begin{align}
\label{eq:guarantee_function_projection}
\sup_{x \in \CA}\SN{f(x)-\Phi(x)} \lesssim \left(1 + \frac{L}{(1-q)^{2\alpha}} \right)\varepsilon^{\alpha}.
\end{align}
Alternatively, with access to $n \gtrsim (\reach/2)^d C_{\CM}$ point queries of $f$, we can construct a ReLU network
$\Phi$ (with dimensions as in \eqref{eq:achitecture_bounds_main} and $\varepsilon \asymp (C_{\CM}/n)^{1/d}$) that approximates $f$ up to
\begin{align*}
\sup_{x \in \CA}\SN{f(x)-\Phi(x)} \lesssim \left(1 + \frac{L}{(1-q)^{2\alpha}} \right)\left(\frac{C_\CM}{n}\right)^{\frac{\alpha}{d}}.
\end{align*}
The same construction can be achieved with a network $\tilde \Phi$ with $L(\tilde \Phi)\lesssim \log(\boundParams{\Phi})L(\Phi)$,
$W(\tilde \Phi)\lesssim (W(\Phi))^2$, $P(\tilde \Phi)\lesssim \log(\boundParams{\Phi})P(\Phi)$ and $\boundParams{\tilde \Phi}\leq 2$
according to \cite[Proposition A.1]{grohs2019deep}.
\end{theorem}
\begin{proof}
A proof sketch and full proof details are given in Section \ref{subsec:proofs_approximation_projection_new}.
\end{proof}

Theorem \ref{thm:approximating_function_projection} shows that functions of \ref{enum:model_1}
can be uniformly approximated to accuracy $\varepsilon$ with a budget of $\CO(\varepsilon^{-d/\alpha})$ queries of $f$ and a network with
$\CO(\log^2(\varepsilon^{-1})\varepsilon^{-d/\alpha})$ nonzero parameters arranged
in $\CO(\log(\varepsilon^{-1}))$ layers. Since the problem class contains $\alpha$-H\"older functions on $\bbR^d$, this result is optimal in terms of the number of needed function queries according to the theory of nonlinear width \cite{devore1989optimal}. Moreover, apart from logarithmic factors, it is optimal  in terms of the number of nonzero parameters in the network \cite[Theorem 1]{yarotsky2018optimal}. A bound for the number of nonzero parameters can be used to control covering numbers of the associated ReLU function spaces \cite[Lemma 5]{schmidt2017nonparametric}. Bounds for covering numbers can then be combined with statistical learning theory
to provide estimation guarantees for empirical risk minimization, see the details in Section \ref{sec:summary_statitics}. We also note that $W(\Phi)$ and $P(\Phi)$ have a mild
log-linear dependency on the ambient dimension $D$, which is possibly not avoidable apart
from cutting the log-factors.

The constant $C_{\CM}$ is intrinsic to $\CM$ and arises from bounding the cardinality of an $\varepsilon$-covering
of $\CM$ as in Lemma \ref{lem:auxiliary_results_diff_geom}. The constant $C_{q}$ and
the factor $(1-q)^{-1}$ in \eqref{eq:guarantee_function_projection} are extrinsic
as they depend on the approximation domain $\CA$ via $(1-q)^{-1}$. The factor $(1-q)^{-1}$ indicates
that approximating $f$ becomes increasingly challenging as $\dist{\CA}{\medial}$ shrinks, i.e., as
the approximation domain approaches the medial axis, where $\pi_{\CM}$ is set-valued and
$f$ loses regularity.

The number of needed queries of $f$ and the required dimension of the network in Theorem \ref{thm:approximating_function_projection} are,
apart from log-factors and constants, similar to the case $\CA = \CM$ and $\pi_{\CM} = \Id$ \cite{shaham2018provable,schmidt2019deep,nakada2019adaptive}.
Hence, previously studied function classes can be significantly extended
without compromising on the ability of deep networks to approximate them.

\begin{remark}\ \\
\label{rem:additional_comments}
1. Instead of defining $\CM$ implicitly by the target $f$ as in \ref{enum:model_1},
we can also start with a fixed manifold $\CM$,
an associated approximation domain $\CM(q)$ for $q \in [0,1)$, and ask how well all functions of the type
$f(x) = g(\pi_{\CM}(x))$ can be approximated over $\CM(q)$. Theorem \ref{thm:approximating_function_projection} applies to this case as well.
Furthermore, we note that all weights except for the last layer are used for the approximation of $\pi_{\CM}$ in our
construction. Therefore, if we approximate two functions $f(x) = g(\pi_{\CM}(x))$ and $\widetilde f(x) = \widetilde g(\pi_{\CM}(x))$
using the proposed construction, the associated networks differ only in the last layer.\\
2. As the proof in Section \ref{subsec:proofs_approximation_projection_new} will show, there is no significant advantage of the ReLU activation for the construction
of the approximating network. Therefore, we believe that similar constructions are realizable with other common activation functions.
We focus on the ReLU in this work simply because it is the most prominent choice in practice.\\
3. The results of Theorem \ref{thm:approximating_function_projection} are achieved with networks that have
duplicate weights, for the sake of an easier analysis. Removing duplicate weights only affects the constant factors in the bounds
of Theorem \ref{thm:approximating_function_projection}.
\end{remark}

As a corollary of Theorem \ref{thm:approximating_function_projection}, we can also derive an approximation guarantee for $\pi_{\CM}$.

\begin{corollary}
\label{cor:approximation_of_projection_operator}
Let $q \in [0,1)$ and let $\CM$ be a nonempty, connected, compact $d$-dimensional manifold with $\reach > 0$ and $\CM(q) \subseteq [0,1]^D$.
For $\varepsilon \in (0,\reach/2)$ there exists a ReLU network $\Phi$ with architecture constrained
as in Theorem \ref{thm:approximating_function_projection} and
\begin{align}
\sup_{x\in \CM(q)}\N{\pi_{\CM}(x) - \Phi(x)}_{\infty}\lesssim\varepsilon.
\end{align}
\end{corollary}
\begin{proof}
The proof is given at the end of Section \ref{subsec:proofs_approximation_projection_new}.
\end{proof}

\section{Main result: distance-based target functions}
\label{sec:approximation_distance_operator}
We now study distance-based target functions as foreshadowed by Equation \eqref{eq:intro_model_distance}.
The rigorous definition of the function class requires the
well-known concept of  packing numbers.
\begin{definition}[{\cite[Section 4.2]{vershynin2018high}}]
\label{def:geoodesic_covering_number}
Let $\CC$ be a set endowed with a metric $\Delta$ and let $\delta > 0$.
We say $\CZ \subset \CC$ is $\delta$-separated if for any $z\neq z' \in \CZ$
we have $\Delta(z,z') > \delta$. $\CZ$ is maximal separated if adding any other
point in $\CZ$ destroys the separability property. The cardinality of the largest maximal separated set
is called the packing number and denoted by $\CP(\delta,\CZ,\Delta)$.
\end{definition}

\begin{enumerate}[leftmargin=1.5cm, label=Class \arabic*]
\setcounter{enumi}{1}
\item\label{enum:model_2} Let $\CC_{1},\ldots,\CC_{M} \subseteq [0,1]^D$  be nonempty closed sets,
let $\normaldist{\cdot}{\cdot} : [0,1]^D \rightarrow [0,1]$ be a continuous (normalized) metric,
and assume there exists $\delta_0 > 0$ such that
$\CP(\delta, \CC_{\ell}, \normaldistnoargs) \lesssim \delta^{-d}$ for all $\delta < \delta_0$ and $\ell \in [M]$.
Furthermore, assume there exists $p > 0$ so that $\normaldistnoargs^\expmet$ is ReLU-approximable in the sense that,
for any fixed $z \in [0,1]^D$ and $\varepsilon > 0$, there exists a ReLU net $\Psi_{z,\varepsilon}$
with $\sup_{x \in [0,1]^D}\SN{\normaldist{x}{z}^\expmet - \Psi_{z,\varepsilon}(x)} \leq \varepsilon$ and
\begin{equation}
\begin{aligned}
\label{eq:assumption_distance_relu_approximable}
L(\Psi_{z,\varepsilon}) \leq &L_{\normaldistnoargs}(\varepsilon),\ \  W(\Psi_{z,\varepsilon}) \leq W_{\normaldistnoargs}(\varepsilon),
\ \ P(\Psi_{z,\varepsilon}) \leq P_{\normaldistnoargs}(\varepsilon),\ \ B(\Psi_{z,\varepsilon}) \leq B_{\normaldistnoargs}(\varepsilon).
\end{aligned}
\end{equation}
We consider functions of the form $f(x) = \sum_{\ell=1}^{M}g_{\ell}\left(\min_{z \in \CC_{\ell}}\normaldist{x}{z}^\expmet\right)$,
with $g_{\ell} : [0,1]\rightarrow [0,1]$ satisfying for some $\alpha \in (0,1]$
\begin{equation}
\label{eq:hoelder_condition_one_d}
\SN{g_{\ell}(t) - g_{\ell}(t')} \leq L\SN{t-t'}^{\alpha}\qquad \textrm{for all }\qquad t,t'\in [0,1].
\end{equation}
\end{enumerate}

The parameter $\expmet \geq 1$ in \ref{enum:model_2} can be useful for making functions
$\normaldist{x}{z}^\expmet$ more easily approximable compared to $\normaldist{x}{z}$
(think of $p$-th order Euclidean norms raised to the power $p$, which are degree $p$ polynomials and can be easily approximated as shown in Lemma \ref{lem:relu_lp_norm}).
Furthermore, \eqref{eq:assumption_distance_relu_approximable}
should be seen as a definition of $L_{\normaldistnoargs}(\varepsilon),W_{\normaldistnoargs}(\varepsilon),
P_{\normaldistnoargs}(\varepsilon),
B_{\normaldistnoargs}(\varepsilon)$ rather than as an assumption, because it poses almost no restriction on
the metric $\normaldistnoargs$ in view of universal approximation theorems. However, if the approximation of $\normaldistnoargs^\expmet$
is responsible for an overwhelming majority of the required nonzero parameters in the network construction or scales
exponentially in $D$, the corresponding metric $\normaldistnoargs$ does not induce an interesting function class in the sense
of reducing the original complexity of approximating $f$. We return to this point after stating the main result
by discussing some practically relevant metrics $\normaldistnoargs$.

\begin{theorem}
\label{thm:approximating_function_distance}
Let $f$ be a function of \ref{enum:model_2}. For any $\varepsilon \in (0,2p\delta_0)$ there exists a ReLU network $\Phi$, which uses
$n \lesssim \varepsilon^{-1}$ point queries from each $g_1,\ldots,g_M$ and has its dimensions bounded according to
\begin{align*}
L(\Phi)&\lesssim  d\log(p\varepsilon^{-1}) + L_{\normaldistnoargs}(\varepsilon),\\
W(\Phi)&\lesssim M\expmet^d\varepsilon^{-(1 \vee d)}W_{\normaldistnoargs}(\varepsilon)\\
P(\Phi)&\lesssim M p^dd\log(p\varepsilon^{-1})\varepsilon^{-(1\vee d)} + M p^d\varepsilon^{- d}P_{\normaldistnoargs}(\varepsilon)
\end{align*}
and $B(\Phi) \leq 1 \vee \boundParamsnoargs_{\normaldistnoargs}(\varepsilon)$,
such that
\begin{align}
\label{eq:guarantee_function_distance}
\sup_{x\in[0,1]^D}\SN{f(x)-\Phi(x)} \lesssim M L \varepsilon^{\alpha}.
\end{align}
Alternatively, with access to $n$ point queries from each $g_1,\ldots,g_M$, we can construct a ReLU network
$\Phi$ (with dimensions as above and $\varepsilon \asymp n^{-1}$) that approximates $f$ up to
\begin{align*}
\sup_{x \in \CA}\SN{f(x)-\Phi(x)} \lesssim \frac{M L}{n^{\alpha}}.
\end{align*}
\end{theorem}
\begin{proof}
The proof is deferred to Section \ref{subsec:proof_approximation_distance}.
\end{proof}

As long as $L_{\normaldistnoargs}(\varepsilon),\ W_{\normaldistnoargs}(\varepsilon)$, and $P_{\normaldistnoargs}(\varepsilon)$
grow at most polylogarithmically in $\varepsilon^{-1}$ and possibly polynomially in $D$, Theorem \ref{thm:approximating_function_distance}
shows that their contribution to the overall network complexity is negligible. Specifically,
Theorem \ref{thm:approximating_function_distance} then implies approximation of
$f$ to accuracy $\varepsilon$ using $\CO(\varepsilon^{-1})$ queries from each $g_1,\ldots,g_M$ and
$\CO(\textrm{polylog}(\varepsilon^{-1})\varepsilon^{-(1 \vee d)})$ nonzero parameters arranged in
$\CO(\textrm{polylog}(\varepsilon^{-1}))$ layers.
If $M = 1$, querying $g_1$ is similar to querying $f$ and the result is optimal according
to the theory of nonlinear width \cite{devore1989optimal}.
Moreover, if $d\leq 1$ and if we neglect logarithmic factors, the number of required nonzero parameters
is optimal among all networks whose depth grows at most logarithmically in $\varepsilon^{-1}$ \cite[Theorem 1]{yarotsky2018optimal}.
We remark that 2. and 3. of Remark \ref{rem:additional_comments} about the importance of the ReLU activation
and the use of weight duplication apply to Theorem \ref{thm:approximating_function_distance} as well.

Metrics induced by $L_p$-norms present a practical and versatile instance of
metrics that can be efficiently approximated by deep networks.
Specifically, we require $\CO(D\log(D/\varepsilon))$
nonzero parameters, arranged in $\CO(\log(D/\varepsilon))$ layers as shown in Lemma \ref{lem:relu_lp_norm},
so that the overall number of nonzero parameters of the approximating network equals
$\CO(D\log(D/\varepsilon)\varepsilon^{-(1 \vee d)})$ ($L_1$ and $L_{\infty}$ are actually
exactly realizable with smaller networks, see also Remark \ref{rem:L_infty_appprox}). We can also consider
variations of $L_p$-norms, for instance by first transforming inputs through
a sparsity inducing basis (e.g., a wavelet transformation operator) and then use an $L_1$-norm,
or by considering weighted sums of multiple $L_p$-norms, where each $L_p$-norm measures the discrepancy
of two points at different scales. To give a concrete example, we refer to the work \cite{shirdhonkar2008approximate,leeb2016holder}, who
approximate the earth movers distance for histograms using a weighted sum of $L_1$-norms of
wavelet coefficients of histogram differences.

We also note that \ref{enum:model_2} contains radial functions with $M=1$, $d = 0$, and $\normaldist{\cdot}{\cdot} = \N{\cdot-\cdot}_2^2$.
\cite{chui2019deep} proves an approximation rate for radial functions similar to ours using smooth activation functions
and \cite{mccane2017deep} shows  dimension-free but sub-optimal rates for ReLU networks.
Interestingly, \cite{chui2019deep} also proves that shallow networks can not achieve
dimension-free rates, because they can not leverage the compositional nature of $f$.

\section{Implications on nonparametric estimation problems}
\label{sec:summary_statitics}
In this section we briefly highlight some implications of our results
on nonparametric estimation problems. We will focus on regression problems with
$X$ being a random input vector in $\bbR^D$, $Y = f(X) + \zeta$, and $\bbE[\zeta|X] = 0$.
Furthermore, we assume $f$ is of \ref{enum:model_1} or \ref{enum:model_2} (where the metric $\normaldistnoargs$ is
assumed to be as efficiently approximable as $L_p$-norms by a deep ReLU net).

Several very recent works \cite{bauer2019deep,schmidt2017nonparametric,schmidt2019deep}
studied the performance of the empirical risk minimizer
\begin{align}
\label{eq:erm_def}
\hat \Phi \in \argmin_{\hat \Psi \in \CN_N}\sum_{i=1}^{N}\left(\hat \Psi(X_i)- Y_i\right)^2,
\end{align}
where the hypothesis space $\CN_N $ contains ReLU networks $\hat \Psi$ with complexity bounded by $L(\hat \Psi) \leq L_N$, $W(\hat \Psi) \leq W_N$, $P(\hat \Psi) \leq P_N$,
$B(\hat \Psi) \leq B_N$, and $L_N,W_N,P_N,B_N$ depend on the size of the training data $\{(X_i,Y_i): i \in [N]\}$.
The complexity of $\CN_N$ can be controlled in terms of $L_N, W_N, P_N$ and $B_N$ \cite[Lemma 5]{schmidt2017nonparametric}, and
a bias-variance tradeoff analysis allows for establishing estimation rates for \eqref{eq:erm_def},
whenever the approximation error $\inf_{\Psi \in \CN_N}\bbE\left(\Psi(X) - f(X)\right)^2$
can be bounded in terms of  $L_N,W_N,P_N$ and $B_N$.

Following this strategy, Theorems \ref{thm:approximating_function_projection} and \ref{thm:approximating_function_distance}
can be used to derive the estimation guarantees
\begin{align}
\label{eq:erm_rate_example}
\bbE\left(\hat \Phi(X) - f(X)\right)^2 \in \begin{cases}
\tilde \CO(N^{-\frac{2\alpha}{2\alpha +d}}), & \textrm{ if } $f$ \textrm{ is of   \ref{enum:model_1}},\\
\tilde \CO(N^{-\frac{2\alpha}{2\alpha +(1\vee d)}}), & \textrm{ if } $f$ \textrm{ is of  \ref{enum:model_2}},\\
\end{cases}
\quad \textrm{as } N\rightarrow \infty,
\end{align}
where $\tilde \CO$ absorbs log-factors in $N$. The corresponding relations between the architectural constraints and the size of the training data $N$
are given by
\begin{equation}
\label{eq:network_constraints}
\begin{aligned}
L_N \in  \tilde \CO(1),\ P_N \in \tilde \CO\left(N^{\frac{d}{2\alpha+d}}\right),\ W_N \in \tilde \CO\left(N^{\frac{d}{2\alpha+d}}\right),\ B_N \in \tilde \CO\left(N^{\frac{2}{2\alpha+d}}\right),\textrm{ for \ref{enum:model_1}},\\
\textrm{and}\ \  L_N \in  \tilde \CO(1),\ P_N \in \tilde \CO\left(N^{\frac{1\vee d}{2\alpha+1\vee d}}\right),\ W_N \in \tilde \CO\left(N^{\frac{1\vee d}{2\alpha+1\vee d}}\right),\ B_N \in \tilde \CO\left(1\right),\textrm{ for \ref{enum:model_2}}.
\end{aligned}
\end{equation}
The rates in \eqref{eq:erm_rate_example} are statistically minimax optimal for \ref{enum:model_1} (even if $X$ is supported
exactly on a $d$-dimensional manifold) and minimax optimal for \ref{enum:model_2} if $d \leq 1$ \cite{stone1982optimal}.

To the best of our knowledge, the literature does not provide algorithms
for estimating $f$ with the rate \eqref{eq:erm_rate_example} under the assumptions imposed in \ref{enum:model_1}
or \ref{enum:model_2}.
Focusing on \ref{enum:model_1}, a few special cases have been considered in the literature.
First, if $X$ is supported exactly on $\CM$,
classical methods such as k nearest neighbors, piecewise polynomials, or kernel methods achieve
the rate \eqref{eq:erm_rate_example}  \cite{kpotufe2011k, bickel2007local,ye2008learning}.
Second, if $\CM$ is a linear subspace, methods from sufficient dimension reduction literature combined with traditional estimators
achieve \eqref{eq:erm_rate_example} under certain reasonable assumptions \cite{ma2013review, li2018sufficient}.
Third, if $\dim(\CM) = 1$ and $g$ is strictly monotone along the manifold, \cite{kereta2019nonlinear}
achieves near-optimal rates in the case $\zeta \equiv 0$.
Still, none of these approaches achieves \eqref{eq:erm_rate_example} in the generality that is considered here,
which indicates a gap between the performance of `traditional estimators' and deep neural  networks.
We add though that computing the global minimizer \eqref{eq:erm_def}
within small (polynomial) runtime is not well-understood, because networks $\hat \Psi_N \in \CN_N$ are
underparametrized by the choice $P(\hat \Psi_N)\ll N$.

Finally, we note that
checking whether a function belongs to \ref{enum:model_1} or \ref{enum:model_2} is challenging in practice, because the input
$\{X_i : i \in [N]\}$ does not reveal the compositional nature of $x\mapsto f(x)$ by itself. Instead, the compositional nature
is only visible when jointly using $\{(X_i,Y_i) : i \in [N]\}$, for instance by inspecting
derivative tensors of the function $f$. As an example, Hessian matrices of functions that belong to  \ref{enum:model_1}
have at most $d$ nontrivial eigenvalues at any point $x \in \CA$ and the nontrivial eigenspace corresponds
to a subspace of the tangent space of $\CM$.
For functions of \ref{enum:model_2}, derivative tensors also tend to have a specific shape, whose precise
form depends on the distance $\normaldistnoargs$ and the parameter $M$.

\section{Preparatory material: a brief primer on ReLU calculus}
\label{sec:preliminaries}
ReLU calculus refers to a framework for developing ReLU network approximation
guarantees based on successively approximating increasingly complex building blocks. Corresponding results
have been developed in recent years \cite{yarotsky2017error,yarotsky2018optimal,petersen2018optimal,boelcskei2019optimal,grohs2019deep},
following the increased popularity of the ReLU activation in practice.
This section gives an overview of some of the results, which we use in the remainder.
Thoughout, deep ReLU networks are defined as stated in Definition \ref{def:relu_nets}.

The first  step towards developing approximation guarantees with ReLU nets
is to endow the space of ReLU nets with two basic operations, namely compositions and linear combinations.

\begin{lemma}[{Composition \cite[Lemma 2.5]{grohs2019deep}}]
\label{lem:cocatanation}
Let $\Phi_1 : \bbR^{N_0}\rightarrow \bbR^{N_{L_1}}$ and $\Phi_2 : \bbR^{N_{L_1}}\rightarrow \bbR^{N_{L_2}}$
be two ReLU nets. There exists a ReLU net $\Psi : \bbR^{N_0}\rightarrow \bbR^{N_{L_2}}$ with
$\Psi(x) = \Phi_2(\Phi_1(x))$ and $L(\Psi) = L(\Phi_1) + L(\Phi_2)$, $W(\Psi) = \max\{W(\Phi_1), W(\Phi_2), 2 N_{L_1}\}$,
$P(\Psi) = 2 (P(\Phi_1) + P(\Phi_2))$, and $\boundParams{\Psi} \leq \boundParams{\Phi_1} \vee \boundParams{\Phi_2}$.
\end{lemma}

\begin{lemma}[{Linear combination \cite[Lemma 2.7]{grohs2019deep}}]
\label{lem:relu_linear_combination}
Let $\{\Phi_i : i \in [N]\}$ be a set of ReLU networks with similar input dimension $N_{0}$.
There exist ReLU networks $\Psi_1$ and $\Psi_2$ that realize the maps
$\Psi_1(x) = (\alpha_1 \Phi_1(x),\ldots, \alpha_N \Phi_N(x))$ and $\Psi_2(x) = \sum_{i=1}^{N}\alpha_i \Phi_i(x)$.
For $j \in \{1,2\}$, they satisfy $L(\Psi_{j}) = \max_{i \in [N]}L(\Phi_i)$, $W(\Psi_j) \leq \sum_{i=1}^{N}\left(2 \vee W(\Phi_i) \right)$,
$P(\Psi_j) = \sum_{i=1}^{N}(P(\Phi_i) + W(\Phi_i) + 2(L-L(\Phi_i)) + 1)$,
and $\boundParams{\Psi_j} \leq \max \{1, \max_{i \in [N]} \boundParams{\Phi_i} \vee \alpha_i\}$.
\end{lemma}

Using compositions of ReLU nets, the next step is to approximate the square function $x\mapsto x^2$,
for instance by using the so-called `saw-tooth function' approximation \cite{yarotsky2017error}. Then, by using
the identity
$$
xy = \frac{1}{2}\left(x^2 + y^2 - (x-y)^2\right),
$$
one can establish approximation guarantees for arbitrary multiplication and for multivariate polynomials of arbitrary degree. We exemplarily report the results
of \cite{grohs2019deep} in the next lemma.

\begin{lemma}[{\cite[Proposition 3.2, 3.4 and 3.6]{grohs2019deep}}] Let $\varepsilon \in (0,1/2)$. \hfill
\label{lem:grohs_summary}

\noindent
1) There exists a network with $L(\Phi)\lesssim \log(1/\varepsilon)$,
$W(\Phi) = 3$, $P(\Phi) \lesssim \log(1/\varepsilon)$, and $\boundParams{\Phi}\leq 1$
such that
$
\sup_{x \in [0,1]}\SN{\Phi(x) - x^2}\leq \varepsilon.
$

\noindent
2) Let $R \geq 1$. There exists a network $\Phi$ with $L(\Phi)\lesssim \log(R/\varepsilon)$,
$W(\Phi)\leq 5$, $P(\Phi)\lesssim \log(R/\varepsilon)$ and $B(\Phi)\leq 1$ so that
$
\sup_{(x,y) \in [-R, R]^2}\SN{\Phi(x,y) - xy}\leq \varepsilon.
$

\noindent
3) Let $m \in \bbN$, $a \in \bbR^{m+1}$, $R \geq 1$. There exists a network $\Phi$ with $L(\Phi)\lesssim m\log(1/\varepsilon)+m^2 \log(R) + m\log(\lceil \N{a}_{\infty}\rceil)$,
$W(\Phi)\leq 9$, $P(\Phi)\lesssim L(\Phi)$, $B(\Phi)\leq 1$, and
$
\sup_{x \in [-R, R]}\SN{\Phi(x) - \sum_{i=0}^{m}a_ix^i}\leq \varepsilon.
$
\end{lemma}

\begin{table}[t]
\begin{center}
\scriptsize
\begin{tabular}{@{}cccccccc@{}}
      map & metric & $L(\Phi)$ & $W(\Phi)$ & $P(\Phi)$ & $B(\Phi)$  & Reference  \\ \toprule
$x\mapsto \N{x}_p^p$ & $L_{\infty}([-R,R]^D)$ & $\CO(p^2\log(\lceil R\rceil D/\varepsilon))$ & $9D$ & $\CO(DL(\Phi))$ & $1$  & Lem. \ref{lem:relu_lp_norm}\\
$(x,t)\mapsto tx$ & $L_{\infty}([-R,R]^{D+1})$ & $\CO(\log(R^2/\varepsilon))$ & $5D$ & $\CO(DL(\Phi))$ & $1$ & Lem. \ref{lem:multiplication_network}\\
$t \mapsto 1/t$ & $L_{\infty}([R^{-1},R])$ & $\CO(R^4\log^2(R/\varepsilon))$ & $9$ & $\CO(L(\Phi))$ & $1$   & Lem. \ref{lem:division}\\
$x \mapsto x/\N{x}_1$ & $L_{\infty}(\{x : R^{-1}\leq \N{x}_1 \leq R\})$ & $\CO(R^4\log^2(R/\varepsilon))$ & $\CO(D)$ & $\CO(DL(\Phi))$ & $1$ &  Lem. \ref{lem:l1_normalization}\\
$x \mapsto \min_i x_i$ & Exact on $\bbR^D$ & $2\lceil\log_2(D)\rceil$ & $3\lceil D/2\rceil$ & $11D\lceil\log_2(D)\rceil$ & $1$  & Lem. \ref{lem:relu_calculus_minimum}
\\\bottomrule
\end{tabular}
\end{center}
\caption{Basic ReLU calculus results that are relevant to the manuscript. We use $x \in \bbR^D$ for vectors and $t \in \bbR$ for scalars. The approximation accuracy is $\varepsilon$
in the respective metric and $\CO(\cdot)$ means as $\varepsilon \rightarrow 0$. $L$, $W$, $P$, and $B$ denote bounds on depth, width, number of parameters,
and coefficient size of the network respectively.}
\label{tab:relu_calculus}
\end{table}

A natural next step is to study the approximation of
functions with a certain degree of regularity. This can be done for instance by using local Taylor expansions
 and by approximating Taylor polynomials and indicator functions
through deep networks. As a result, ReLU nets are able to uniformly approximate functions
with a certain degree of regularity  with an optimal number of
function queries and nonzero parameters, see e.g. \cite{yarotsky2017error,yarotsky2018optimal,
schmidt2017nonparametric}. As an example (and since it suffices for our purposes), we
present a simplified version of \cite[Theorem 5]{schmidt2017nonparametric} for $\alpha$-H\"older univariate functions.

\begin{theorem}[{Simplified version of \cite[Theorem 5]{schmidt2017nonparametric}}]
\label{thm:smooth_function_approx}
Let $L \geq 1$, $\alpha \in (0,1]$ and consider
$f : [0,1]\rightarrow \bbR$ with
$
\SN{f(t) - f(s)} \leq L\SN{t-s}^{\alpha}$ for all $t,s \in [0,1].
$
For any $\varepsilon  > 0$ there exists a ReLU network $\Phi$ that uses
$n \lesssim \varepsilon^{-1}$ point queries of $f$ and has complexity bounded by
$L(\Phi)\lesssim \log(1/\varepsilon)$,
$W(\Phi)\lesssim 1/\varepsilon$, $P(\Phi) \lesssim \log(1/\varepsilon) 1/\varepsilon$,
$\boundParams{\Phi} \leq 1$ such that
\begin{align*}
\sup_{x \in [0,1]^k}\SN{f(x) -  \Phi(x)} \leq L \varepsilon^{\alpha}.
\end{align*}
\end{theorem}
\begin{proof}
With $\alpha \in (0,1]$, \cite[Theorem 5]{schmidt2017nonparametric} gives (using the same notation as in the reference)
\begin{align*}
\sup_{x \in [0,1]^k}\SN{f(x) -  \Phi(x)}\lesssim LN2^{-m} + LN^{-\alpha},
\end{align*}
where $N$ and $m$ effectively describe width and depth of the approximating network $\Phi$. By choosing $N \asymp 1/\varepsilon$
and $m \asymp \log_2(1/\varepsilon^{(1+\alpha)} ))$ with suitable universal constants both summands are bounded by $L\varepsilon^{\alpha}/2$, giving
the asserted approximation guarantee. The required network size can be read of from \cite[Theorem 5]{schmidt2017nonparametric}
by inserting $N$ and $m$. For counting the number of required queries of $f$, we note that $\Phi$ approximates a piecewise constant
approximation of $f$ based on $\CO(\varepsilon^{-1})$ subintervals.
\end{proof}

The aforementioned results present a small subset of existing approximation results for ReLU nets
and give an idea how we can gradually approximate maps of increasing complexity. To faciliate the proofs for our results
in the next two sections, we require some additional elementary approximations. These are listed in Table \ref{tab:relu_calculus},
with proofs deferred to Section \ref{subsec:relu_calculus} in the Appendix.

\section{Proof of Theorem \ref{thm:approximating_function_projection}}
\label{subsec:proofs_approximation_projection_new}
We first give a proof sketch
that outlines the strategy and additionally highlights the main challenges compared to the
previously studied case $\CA = \CM$ and $\pi_{\CM}=\Id$. Afterwards we present the proof details.
Throughout we let $C_d, C_\CM$ and $C_q$ be the constants defined in
Theorem \ref{thm:approximating_function_projection}.

\subsection{Proof sketch and comparison with the case $\CA = \CM$}
\label{subsec:proof_details_sketch}
Our proof strategy shares some similarities with existing proof strategies for the case
$\CA = \CM$, see for instance \cite{shaham2018provable,schmidt2019deep,nakada2019adaptive},
but also differs in some aspects due to additional complications arising from the high-dimensional
approximation domain. In both cases, we can start with a maximal separated $\delta$-net $\{z_1,\ldots,z_K\}$ of $\CM$ (see Definition \ref{def:geoodesic_covering_number}), which has
cardinality bounded by $K\approx C_{\CM}\delta^{-d}$ according to Lemma \ref{lem:auxiliary_results_diff_geom} below.
Then, by defining $U_i$ as geodesic balls $U_i := \{z \in \CM : d_{\CM}(z,z_i)\leq\delta\}$, the subsets
$U_1,\ldots,U_K$ cover the manifold $\CM$ and the preimages  $\pi_{\CM}^{-1}(U_1),\ldots,\pi_{\CM}^{-1}(U_K)$ cover the approximation domain $\CA \subseteq \CM(q)$.
Hence, for any partition of unity $\eta_1,\ldots,\eta_K$ subject to $\pi_{\CM}^{-1}(U_1),\ldots,\pi_{\CM}^{-1}(U_K)$,
we can express $f$ by $f(x) = \sum_{i}f(x)\eta_i(x)$.

Let us now denote the orthoprojector onto the tangent space at $z_i \in \CM$ by $A_i$.
If we are in the case $\CA = \CM$, we naturally have $U_i = \pi_{\CM}^{-1}(U_i) \cap \CA$ and the sets $U_i$ are isomorphic
to $A_i(U_i)$ (provided $\delta < \reach/2$, i.e., the covering of $\CM$ is sufficiently fine \cite{shaham2018provable,schmidt2019deep}). Therefore, approximating
$f\eta_i$ over $U_i$ is morally like approximating a function on a compact
subset of $\bbR^d$ and we can apply results from
\cite{mhaskar1993approximation,mhaskar1996neural,yarotsky2017error} to achieve approximation
guarantees that depend exponentially on $d$ instead of $D$. By linear combination of
the resulting $C_{\CM}\delta^{-d}$ approximants, we then obtain an approximation to $f$.

In the case $\CM \subset\CA \subseteq \CM(q)$ the aforementioned strategy unfortunately can not be used, because
each $\eta_i$ in the partition of unity is supported on a compact subset of $\bbR^D$, which is not isomorphic to a compact set in $\bbR^d$. Hence,
naively using results from \cite{mhaskar1993approximation,mhaskar1996neural,yarotsky2017error} to approximate an arbitrary
partition of unity $\eta_1,\ldots,\eta_K$ subject to $\pi_{\CM}^{-1}(U_1),\ldots,\pi_{\CM}^{-1}(U_K)$ incurs the curse of dimensionality.

Instead, we will use a finer covering of $\CM$ at the scale
$\delta \approx \varepsilon$ (as opposed to $\delta \approx \reach$ in the case $\CA = \CM$)
and employ the piecewise constant approximation
$
f(x) = \sum_{i}f(x)\eta_i(x) \approx \sum_{i}g(z_i)\eta_i(x).
$
If $\eta_1,\ldots,\eta_K$ form a partition of unity with the localization property
\begin{align}
\label{eq:localization_property_1}
\sup_{x \in \CM(q) : \eta_i(x) \neq 0} d_{\CM(q)}(x, z_i) \lesssim \varepsilon,
\end{align}
the piecewise constant approximation $f(x)\approx \sum_{i}g(z_i)\eta_i(x)$ is accurate up to $\CO(\varepsilon^{\alpha})$
for $\alpha$-H\"older $g$. We note however
that we have to approximate $K \approx C_{\CM}\varepsilon^{-d}$ functions $\eta_1,\ldots,\eta_K$
by deep networks, which means that we can allocate at most $\CO(\textrm{polylog}(\varepsilon^{-1}))$ nonzero parameters
for each individual approximation  to match the overall result achieved in Theorem \ref{thm:approximating_function_projection}. Thus,
$\eta_i$'s have to satisfy \eqref{eq:localization_property_1}, while also being approximable
by relatively small networks.

Designing such $\eta_i$'s is main difficulty of the proof. We first derive an auxiliary result
to locally approximate the extended geodesic metric $d_{\CM(q)}(x,z_i) = d_{\CM}(\pi_{\CM}(x),z_i)$ around $z_i$ by basic features of the input vector $x$.
Namely, Proposition \ref{prop:metric_equivalency} shows that, for any $p \in [q,1)$, we have the local metric equivalence
\begin{align}
\label{eq:metric_equivalency_proof_sketch}
\N{A(z_i)^\top(x-z_i)}_2 \lesssim d_{\CM(q)}(x, z_i) \lesssim \frac{1}{1-p}\N{A(z_i)^\top(x-z_i)}_2
\end{align}
for every $x$ contained in $B_{p\lreach{z_i}}(z_i)$ and with $\N{A(z_i)^\top(x-z_i)}_2\lesssim (1-p)\reach$.
Intuitively, a point $x$ satisfies $x \in B_{p\lreach{z_i}}(z_i)$ and $\N{A(z_i)^\top(x-z_i)}_2\lesssim (1-p)\reach$ if it is contained in an $L_2$-ball around $z_i$
that extends up to $p\lreach{z_i}$ in normal direction and $(1-p)\reach$ in tangential direction.
Equation \eqref{eq:metric_equivalency_proof_sketch} implies that
we can approximate $d_{\CM(q)}(x,z_i)$ on such balls using $\textN{A(z_i)^\top(x-z_i)}_2$.

Crucially, $\N{x-z_i}_2$ and $\textN{A(z_i)^\top(x-z_i)}_2$ are simple features
 of the input $x$, because (after taking squares) they are composed of a linear
transformation followed by a polynomial of degree $2$ of the input $x$. Hence, we
can approximate these features efficently using deep networks and construct a partition
of unity function accordingly. The precise construction reads
\begin{align*}
\tilde \eta_i(x) := \relu{1 - \left(\frac{\N{x-z_i}_2}{p\lreach{z_i}}\right)^2 - \left(\frac{\N{A(z_i)^\top(x-z_i)}_2}{h\varepsilon}\right)^2}\quad \textrm{ and }\quad \eta_i(x) := \frac{\tilde \eta_i(x)}{\N{\tilde \eta(x)}_1},
\end{align*}
where $h$ is a bandwidth parameter that is suitably chosen as a function of $q$ and $\reach$.
As shown in Proposition \ref{prop:construction_partition_of_unity} and Lemma \ref{lem:existence_of_eta_network},
$\eta_i$ satisfies \eqref{eq:localization_property_1} and can be approximated to accuracy $\varepsilon$ by a ReLU network $\Theta_i$ with
$\CO(\textrm{polylog}(\varepsilon^{-1}))$ nonzero parameters.

\begin{figure}
\centering
\includegraphics[scale = 0.75]{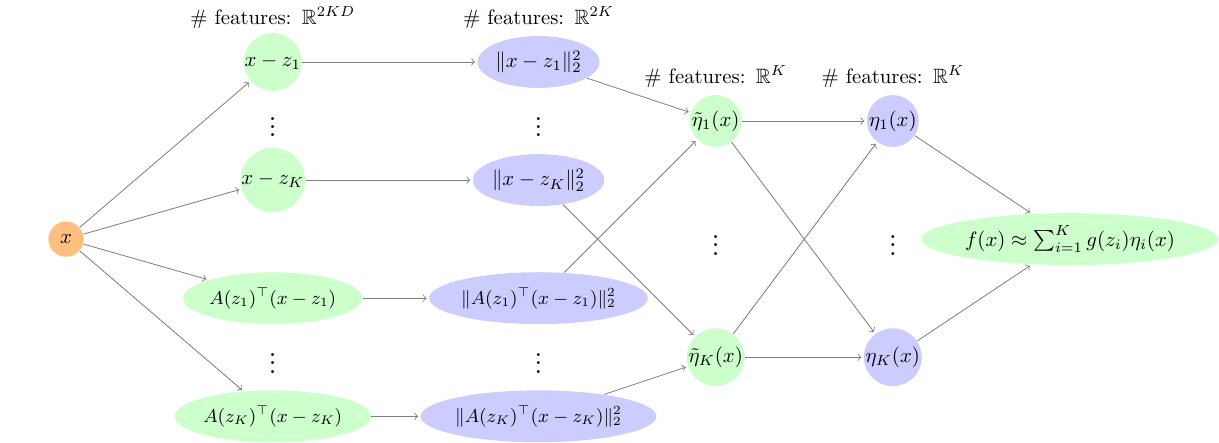}
\caption{Schematic ReLU construction used to approximate \ref{enum:model_1}. At each node we illustrate the feature of $x$ that
is being approximated by the network. Green nodes can be exactly realized (assuming the previous layer is exact) with finite width layers, whereas
blue nodes are approximated to accuracy $\CO(\varepsilon)$ using $\CO(\textrm{polylog}(\varepsilon^{-1}))$ layers.}
\label{fig:network_projection}
\end{figure}
Finally, after recalling that linear combinations of ReLU networks are still
ReLU networks, we approximate $f$ by
\begin{align}
\label{eq:network_construction_proof_sketch}
\Phi(x) = \sum_{i=1}^{K}g(z_i)\Theta_i(x).
\end{align}
A schematic illustration
of the complete approximating network is depicted in Figure \ref{fig:network_projection}.

\subsection{Proof details}
\label{subsec:projection_proof_details}
Let us first collect some elementary
facts from differential geometry that are required in the following.
\begin{lemma}
\label{lem:auxiliary_results_diff_geom}
Let $\CM$ be a $d$-dimensional compact connected Riemannian manifold embedded in $[0,1]^D$
with reach $\reach > 0$, Lebesgue volume $\textrm{Vol}(\CM)$,
and endowed with the Riemannian metric induced by $\bbR^D$. Let $v,z \in \CM$.

\noindent
1) If $\N{v-z}_2\leq \reach/2$ then $d_{\CM}(v,z) \leq \reach(1-\sqrt{1-2\N{v-z}_2/\reach})$.

\noindent
2) For any $r \in (0,\reach/2)$  we have
$\textrm{Vol}(B_{\CM,r}(v)) \leq C_d(\reach/(\reach-2r))^dr^d$.

\noindent
3) The tangent space orthoprojectors $A(v)A(v)^\top \in \bbR^{D\times D}$ satisfy
perturbation bounds
\begin{align}
\label{eq:tangent_perturbation}
\N{A(v)A(v)^\top - A(z)A(z)^\top}_2 \leq \frac{1}{\reach}d_{\CM}(v,z).
\end{align}
\noindent
4) The local reach as defined in \eqref{eq:local_reach} satisfies the perturbation bound
\begin{align}
\label{eq:lipschitz_continuity_lreach}
\SN{\lreach {v} -\lreach{z}} \leq \N{v-z}_2 \leq d_{\CM}(v,z).
\end{align}
5) We have
$\CP(\delta,\CM,d_{\CM}) \leq 3^d \textrm{Vol}(\CM) d^{d/2}\delta^{-d}$ for any $\delta \in (0,\frac{1}{2}\reach)$.

\noindent
6) Let $\CZ$ be a maximal $\delta$-separated set of $\CM$ with respect to the geodesic metric. For
any $p$ with $p\delta \in (0,\reach/4)$ we have $\SN{\CZ \cap B_{\CM,p\delta}(v)} \leq C_d (5p\sqrt{d})^d$.
\end{lemma}
\begin{proof}
Property 1) can be found in \cite[Lemma 3]{genovese2012minimax} and 2) is derived in \cite[Proposition 1.1]{chazal2013upper}.
3) is similar to \cite[Corollary 3]{boissonnat2019reach}, after noticing that
\begin{align*}
\N{A(v)A(v)^\top - A(z)A(z)^\top}_2 = \sin \angle\left(A(v),  A(z)\right) \leq 2\sin\left( \frac{ \angle\left(A(v),  A(z)\right)}{2}\right),
\end{align*}
where $\angle\left(A(v),  A(z)\right)$ denotes the maximum principal angle between subspaces $\Im(A(v))$ and $\Im(A(z))$.
For 4) we assume without loss of generality $\lreach{v} \geq \lreach{z}$. Then the result follows from
\begin{align*}
\lreach{v}-\lreach{z} &= \dist{v}{\medial}-\dist{z}{\medial} \\
&\leq \N{v-z}_2 + \dist{z}{\medial} - \dist{z}{\medial} = \N{v-z}_2 \leq d_{\CM}(v,z).
\end{align*}
Property 5) can be found in \cite{niyogi2008finding,baraniuk2009random}.
For 6) we first note that $\CZ \cap B_{\CM,p\delta}(v)$ is still a $\delta$-separated set of the geodesic ball $B_{\CM,p\delta}(v)$,
which implies $\SN{\CZ \cap B_{\CM,p\delta}(v)} \leq \CP(\delta,B_{\CM,p\delta}(v), d_{\CM})$. Since the reach
of the geodesic ball $B_{\CM,p\delta}(v)$ is also bounded by $\reach$, we can apply
Property 2) and Property 5) to get
\[
\CP(\delta, B_{\CM,p\delta}(v) , d_{\CM}) \leq  \frac{3^d \textrm{Vol}(B_{\CM,p\delta}(v)) d^{\frac{d}{2}}}{\delta^{d}}
\leq \frac{3^d 2^d C_d p^d\delta^d d^{\frac{d}{2}}}{\delta^{d}} = C_d(5p\sqrt{d})^d.
\]
\end{proof}

The first step to prove Theorem \ref{thm:approximating_function_projection} rigorously establishes the local metric equivalence \eqref{eq:metric_equivalency_proof_sketch}
between the geodesic metric $d_{\CM}(\pi_{\CM}(x),z)$ and $\N{A(z)^\top(x - z)}_2$.

\begin{proposition}
\label{prop:metric_equivalency}
Let $\CM$ be a connected compact $d$-dimensional Riemannian submanifold of $\bbR^D$
and let $q \in [0,1)$. For $x \in \CM(q)$ with $v = \pi_{\CM}(x)$ and arbitrary $z \in \CM$ we have
\begin{equation}
\label{eq:metric_upper_bound}
\N{A(z)^\top(x - z)}_2 \leq \left(1 + \frac{\dist{x}{\CM}}{\reach \vee (\lreach{v}-d_{\CM}(v,z))}\right)d_{\CM}(z, v).
\end{equation}
Let now $p \in [q,1)$ arbitrary. Then for
$x \in B_{p\lreach{z}}(z)$ with $\N{A(z)^\top(x - z)}_2 < \frac{1-p}{3}\reach$, we have
\begin{align}
\label{eq:metric_lower_bound_2}
d_{\CM}(z,v) &\leq \frac{3}{1-p}\N{A(z)^\top(x - z)}_2.
\end{align}
\end{proposition}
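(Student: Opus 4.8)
The plan is to treat both inequalities through the common decomposition $x - z = (v-z) + u$, where $v = \pi_{\CM}(x)$ and $u := x - v$. By Lemma \ref{lem:unique_projection} the offset $u$ lies in $\ker(A(v)^\top)$ and satisfies $\N{u}_2 = \dist{x}{\CM}$. Applying $A(z)^\top$ gives $A(z)^\top(x-z) = A(z)^\top(v-z) + A(z)^\top u$, and the whole argument reduces to controlling the two summands in terms of $d_{\CM}(v,z)$.

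For the upper bound \eqref{eq:metric_upper_bound} I would estimate the two terms by the triangle inequality. The first is immediate: since $A(z)$ has orthonormal columns and Euclidean distance is dominated by geodesic distance, $\N{A(z)^\top(v-z)}_2 \le \N{v-z}_2 \le d_{\CM}(v,z)$. For the second I would use $A(v)^\top u = 0$ to write $A(z)^\top u = A(z)A(z)^\top u = (A(z)A(z)^\top - A(v)A(v)^\top)u$, so that $\N{A(z)^\top u}_2 \le \N{A(z)A(z)^\top - A(v)A(v)^\top}_2\,\dist{x}{\CM}$. Here I would invoke the tangent-projector perturbation estimate in the sharpened local form $\N{A(z)A(z)^\top - A(v)A(v)^\top}_2 \le d_{\CM}(v,z)/(\reach \vee (\lreach{v}-d_{\CM}(v,z)))$: the global factor $\reach^{-1}$ is exactly \eqref{eq:tangent_perturbation}, while the refinement involving $\lreach{v}-d_{\CM}(v,z)$ follows by integrating the tangent variation along a minimizing geodesic from $z$ to $v$, lower-bounding the local reach en route via the $1$-Lipschitz property \eqref{eq:lipschitz_continuity_lreach} and using $\log(1+t)\le t$. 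Summing the two contributions yields \eqref{eq:metric_upper_bound}.

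The lower bound \eqref{eq:metric_lower_bound_2} is the conceptually delicate direction, and the cleanest route I see avoids estimating $A(z)^\top u$ altogether and instead exploits the Lipschitz property of $\pi_{\CM}$. I would introduce the ``verticalization'' $x_0 := z + (I - A(z)A(z)^\top)(x-z)$, which removes the tangential part of $x-z$ at $z$. By construction $x_0 - z \in \ker(A(z)^\top)$ and $\N{x_0 - z}_2 \le \N{x-z}_2 < p\lreach{z}$, so Lemma \ref{lem:unique_projection} gives $\pi_{\CM}(x_0) = z$; moreover $x_0 \in \CM(p)$, and since $p \ge q$ also $x \in \CM(q) \subseteq \CM(p)$. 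Lemma \ref{lem:lipschitz_property} applied on $\CM(p)$ then yields $\N{v-z}_2 = \N{\pi_{\CM}(x) - \pi_{\CM}(x_0)}_2 \le (1-p)^{-1}\N{x - x_0}_2 = (1-p)^{-1}\N{A(z)^\top(x-z)}_2$, using $x - x_0 = A(z)A(z)^\top(x-z)$. This is where the factor $(1-p)^{-1}$ originates. The hypothesis $\N{A(z)^\top(x-z)}_2 < \tfrac{1-p}{3}\reach$ forces $\N{v-z}_2 < \reach/3$, placing the pair $v,z$ in the regime where the standard reach comparison between Euclidean and geodesic distance gives $d_{\CM}(v,z) \le \tfrac{3}{2}\N{v-z}_2$; chaining the two estimates produces $d_{\CM}(v,z) \le \tfrac{3}{2(1-p)}\N{A(z)^\top(x-z)}_2 \le \tfrac{3}{1-p}\N{A(z)^\top(x-z)}_2$.

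I expect the main obstacle to be the upper bound's refined perturbation estimate rather than the lower bound: establishing $\N{A(z)A(z)^\top - A(v)A(v)^\top}_2 \le d_{\CM}(v,z)/(\reach \vee (\lreach{v}-d_{\CM}(v,z)))$ requires the careful local-reach integration along the geodesic and is the only place where \eqref{eq:local_reach}--\eqref{eq:lipschitz_continuity_lreach} genuinely interact. The lower bound, by contrast, is short once one has the idea of comparing $x$ to its vertical projection $x_0$ over $z$ and appealing to Lemmas \ref{lem:unique_projection} and \ref{lem:lipschitz_property}; the only care needed there is to confirm that the hypotheses keep $\N{v-z}_2$ small enough for the Euclidean-to-geodesic comparison constant to stay below the target value $3$.
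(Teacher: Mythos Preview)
Your approach is correct and essentially identical to the paper's: both inequalities are proved via the same decomposition, the same tangent-projector perturbation argument for \eqref{eq:metric_upper_bound}, and the same verticalization $x_0=\tilde x=z+(I-A(z)A(z)^\top)(x-z)$ combined with Lemmas~\ref{lem:unique_projection}--\ref{lem:lipschitz_property} for \eqref{eq:metric_lower_bound_2}. The only notable difference is that in the lower bound you bound $\N{x_0-z}_2\le\N{x-z}_2<p\,\lreach{z}$ directly (since $I-A(z)A(z)^\top$ is a projection), whereas the paper uses the looser triangle inequality $\N{\tilde x-z}_2\le\N{x-z}_2+\N{x-\tilde x}_2$, forcing an auxiliary parameter $\tilde p=(1+2p)/3$; your route yields the sharper intermediate constant $\tfrac{3}{2(1-p)}$ before relaxing to $\tfrac{3}{1-p}$.
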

\noindent

\begin{proof}
Throughout the proof we denote $P(z) = A(z)A(z)^\top$ as the orthoprojector onto
the tangent space of $\CM$ at $z \in \CM$.
For \eqref{eq:metric_upper_bound} we use $P(v)(x-v) = 0$
from part 1) of Lemma \ref{lem:combined_unique_projection_lipschitz_property},
$\N{z-v}_2 \leq d_{\CM}(z,v)$, and the tangent perturbation bound \eqref{eq:tangent_perturbation}
applied to the geodesic path $\gamma_{z\rightarrow v}$ from $z$ to $v$ with
reach bound $\tau_{\gamma_{z\rightarrow v}} = \inf_{y \in \Im(\gamma_{z\rightarrow v})}\lreach{y}$ to compute
\begin{align*}
\N{P(z)(x-z)}_2 &\leq \N{P(z)(v - z)}_2 + \N{P(z)(x - v)}_2 \leq d_{\CM}(v,z) + \N{P(z) - P(v)}_2 \N{x - v}_2\\
&\leq d_{\CM}(v,z) + \frac{\dist{x}{\CM}}{\tau_{\gamma_{z\rightarrow v}}} d_{\CM}(v,z).
\end{align*}
Furthermore, by the $1$-Lipschitz property of the local reach, see \eqref{eq:lipschitz_continuity_lreach}, we have
\begin{align*}
\tau_{\gamma_{z\rightarrow v}} = \inf_{y \in \Im(\gamma_{z\rightarrow v})}\lreach{y} \geq \lreach{v} - \sup_{y \in \Im(\gamma_{z\rightarrow v})}\SN{\lreach{y}-\lreach{v}}
\geq \lreach{v} - d_{\CM}(v,z).
\end{align*}
Since the global bound $\tau_{\gamma_{z\rightarrow v}} \geq \reach$ holds due to $\Im(\gamma_{z\rightarrow v}) \subset \CM$, we obtain
\begin{align*}
\N{P(z)(x-z)}_2 & \leq \left(1 + \frac{\dist{x}{\CM}}{\reach \vee (\lreach{v}-d_{\CM}(v,z))}\right)d_{\CM}(v,z).
\end{align*}

\noindent
For the opposite direction \eqref{eq:metric_lower_bound_2} we
let $\omega := \N{P(z)(x-z)}_2$ and $\tilde x := z + Q(z)(x-z)$,
where $Q(z) := \Id - P(z)$. By construction we have $P(z)(\tilde x - z) = 0$ and
\begin{align*}
\N{x-\tilde x}_2 = \N{x-z - Q(z)(x-z)}_2 = \N{P(z)(x-z)}_2 = \omega.
\end{align*}
Furthermore, since $x \in B_{p\tau_{\CM}(z)}(z)$, $\omega < \frac{1-p}{3}\reach$, and $\reach\leq \lreach{z}$, we can bound
$$
\N{\tilde x - z}_2 \leq \N{x - z}_2 + \N{x-\tilde x}_2 \leq p\tau_{\CM}(z) + \omega < p\tau_{\CM}(z) + \frac{1-p}{3}\tau_{\CM} < \tilde p\tau_{\CM}(z),
$$
for $\tilde p = \frac{1+2p}{3} < 1$. We thus have the decomposition $\tilde x = z + (\tilde x - z)$ for $z \in \CM$, $\tilde x - z \perp \Im(P(z))$, and
$\N{\tilde x - z}_2 < \tilde p \tau_{\CM}(z)$. Part 1) in Lemma \ref{lem:combined_unique_projection_lipschitz_property} implies $z = \pi_{\CM}(\tilde x)$ and $\tilde x \in \CM(\tilde p)$.
Using the Lipschitz property of $\pi_{\CM}$ in part 2) of Lemma \ref{lem:combined_unique_projection_lipschitz_property} and
$x \in \CM(q) \subset \CM(\tilde p)$, $\tilde x \in \CM(\tilde p)$, we get
\begin{align*}
\N{v-z}_2 &= \N{\pi_{\CM}(x) - \pi_{\CM}(\tilde x)}_2 \leq \frac{1}{1-\tilde p}\N{x-\tilde x}_2 = \frac{3}{2(1-p)}\omega.
\end{align*}
We further not that $\frac{3}{2(1-p)}\omega  < \frac{1}{2}\reach$,
so that we can apply part 1) of Lemma \ref{lem:auxiliary_results_diff_geom} to get
\[
d_{\CM}(v,z) \leq \reach - \reach \sqrt{1-\frac{2\N{v-z}_2}{\reach}} \leq \N{v-z}_2 + \frac{2\N{v-z}_2^2}{\reach} \leq 2 \N{v-z}_2.
\]
\end{proof}

We now introduce the partition of unity functions $\eta_1,\ldots,\eta_K$
and show that they satisfy the desired localization property.

\begin{proposition}
\label{prop:construction_partition_of_unity}
Consider a connected compact $d$-dimensional Riemannian submanifold $\CM \subseteq \bbR^D$
and let $q \in [0,1)$. Let $\CZ = \{z_1,\ldots,z_{\SN{\CZ}}\} \subset \CM$ be a maximal $\delta$-separated
set of $\CM$ with respect to $d_{\CM}$.
Define bandwidth parameters $p:=\frac{1}{2}(1+q)$ and  $h := \frac{6}{1-qp^{-1}}$ and  functions $\tilde \eta, \eta : \CM(q) \rightarrow \bbR^{\SN{\CZ}}$
componentwise by
\begin{align}
\label{eq:definition_pou_function}
\tilde \eta_i(x) = \relu{1 - \left(\frac{\N{x-z_i}_2}{p\lreach{z_i}}\right)^2 - \left(\frac{\N{A(z_i)^\top(x-z_i)}_2}{h\delta}\right)^2}\quad \textrm{ and }\quad \eta_i(x) = \frac{\tilde \eta_i(x)}{\N{\tilde \eta(x)}_1}.
\end{align}
There exists a universal constant $C$ such that if $ \delta \in (0,C(1-q)^2\reach)$ we have
\begin{align}
\label{eq:POU_guarantee}
\sup_{x \in \CM(q) : \eta_i(x) \neq 0}d_{\CM(q)}(x, z_i) &\lesssim \frac{\delta}{(1-q)^2},\\
\label{eq:POU_norm_bound}
(1-q)\lesssim\N{\tilde \eta(x)}_1 &\lesssim C_q.
\end{align}
\end{proposition}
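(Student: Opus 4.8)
The plan is to derive both claims from the two-sided comparison in Proposition~\ref{prop:metric_equivalency}, after recording the explicit constants: with $p=\frac12(1+q)$ one has $1-p=\frac12(1-q)$ and $p>q$, and since $1-qp^{-1}=\frac{1-q}{1+q}$ the bandwidth is $h=\frac{6(1+q)}{1-q}$. Throughout write $v:=\pi_{\CM}(x)$, so that $d_{\CM(q)}(x,z_i)=d_{\CM}(v,z_i)$ by definition of the extended metric. For the localization \eqref{eq:POU_guarantee}, note that $\eta_i(x)\neq0$ forces $\tilde\eta_i(x)\neq0$; since the argument of $\relu{\cdot}$ in \eqref{eq:definition_pou_function} is $1$ minus two nonnegative squares, positivity yields both $\N{x-z_i}_2<p\lreach{z_i}$ and $\N{A(z_i)^\top(x-z_i)}_2<h\delta$. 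The first is exactly the hypothesis $x\in B_{p\lreach{z_i}}(z_i)$ of the lower comparison \eqref{eq:metric_lower_bound_2}; the remaining hypothesis $\N{A(z_i)^\top(x-z_i)}_2<\frac{1-p}{3}\reach=\frac{(1-q)\reach}{6}$ holds once $h\delta\le\frac{(1-q)\reach}{6}$, i.e. $\delta<\frac{(1-q)^2\reach}{36(1+q)}$, which the assumption $\delta<C(1-q)^2\reach$ secures for a small universal $C$. Then \eqref{eq:metric_lower_bound_2} gives $d_{\CM}(v,z_i)\le\frac{3}{1-p}\N{A(z_i)^\top(x-z_i)}_2<\frac{3h\delta}{1-p}=\frac{36(1+q)}{(1-q)^2}\delta\le\frac{72}{(1-q)^2}\delta$, using $1+q<2$; this is \eqref{eq:POU_guarantee}.

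For the upper bound in \eqref{eq:POU_norm_bound}, I bound $\N{\tilde\eta(x)}_1=\sum_i\tilde\eta_i(x)$ by the number of nonzero summands, since every $\tilde\eta_i(x)\in[0,1]$. By the localization just proved, $\tilde\eta_i(x)\neq0$ implies $z_i\in B_{\CM,r}(v)$ with $r=\frac{72}{(1-q)^2}\delta$, so the count is at most $\SN{\CZ\cap B_{\CM,r}(v)}$. Applying Lemma~\ref{lem:cardinality_ball} with dilation factor $\frac{72}{(1-q)^2}$ — whose hypothesis $\frac{72}{(1-q)^2}\delta<\frac14\reach$ again follows from $\delta<C(1-q)^2\reach$ — gives $\SN{\CZ\cap B_{\CM,r}(v)}\lesssim_d\left(\frac{72}{(1-q)^2}\right)^d\lesssim_d(1-q)^{-2d}$, as claimed.

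For the lower bound it suffices to exhibit one index with $\tilde\eta_i(x)\gtrsim(1-q)$; this also shows $\N{\tilde\eta(x)}_1>0$, so $\eta$ is well defined. Maximality of $\CZ$ supplies $z_i$ with $d_{\CM}(v,z_i)\le\delta$, and I estimate the two squares in \eqref{eq:definition_pou_function} at this $z_i$. The triangle inequality together with $\N{x-v}_2=\dist{x}{\CM}<q\lreach{v}$, $\N{v-z_i}_2\le\delta$, and $\lreach{z_i}\ge\lreach{v}-\delta$ from \eqref{eq:lipschitz_continuity_lreach} gives $\frac{\N{x-z_i}_2}{p\lreach{z_i}}\le\frac{2q}{1+q}+\CO(\delta/\reach)$, and since $1-(\frac{2q}{1+q})^2=\frac{(1-q)(1+3q)}{(1+q)^2}\ge(1-q)$, the first square leaves $1-(\cdot)^2\ge(1-q)-\CO(\delta/\reach)$. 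For the second, the upper comparison \eqref{eq:metric_upper_bound} bounds $\N{A(z_i)^\top(x-z_i)}_2\le(1+\frac{q}{1-\delta/\reach})\delta$, so after dividing by $h\delta=\frac{6(1+q)}{1-q}\delta$ this square is $\le\frac{(1-q)^2}{36}+\CO(\delta/\reach)$. Hence $\tilde\eta_i(x)\ge(1-q)-\frac{(1-q)^2}{36}-\CO(\delta/\reach)$, and choosing $C$ small makes the $\CO(\delta/\reach)=\CO(C(1-q)^2)$ terms negligible, so $\tilde\eta_i(x)\ge\frac12(1-q)$ and thus $\N{\tilde\eta(x)}_1\gtrsim(1-q)$.

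The delicate point is this lower bound: as $q\to1$ the first square tends to $1$, so only a margin of order $(1-q)$ survives, and the construction must keep the second square together with every $\CO(\delta/\reach)$ perturbation a fixed fraction below that margin. This is exactly what pins down the choices $h=\frac{6(1+q)}{1-q}$ and the $(1-q)^2$-scaling of the admissible $\delta$; the remaining steps are then routine applications of the earlier metric comparison (Proposition~\ref{prop:metric_equivalency}) and the packing estimate (Lemma~\ref{lem:cardinality_ball}).
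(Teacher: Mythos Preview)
Your proof is correct and follows essentially the same approach as the paper: the localization uses the lower comparison \eqref{eq:metric_lower_bound_2} exactly as in the paper, the upper bound on $\N{\tilde\eta(x)}_1$ counts nonzero summands via Lemma~\ref{lem:cardinality_ball}, and the lower bound exhibits a nearest net point and controls both ratios via \eqref{eq:metric_upper_bound} and \eqref{eq:lipschitz_continuity_lreach}. The only tactical difference is in the lower bound: the paper first bounds the unsquared expression $1-\frac{\N{x-z_j}_2}{p\lreach{z_j}}-\frac{\N{A(z_j)^\top(x-z_j)}_2}{h\delta}\ge\frac{1}{4}(1-qp^{-1})$ and then notes that replacing each subtracted term by its square only helps (since both lie in $[0,1]$), whereas you compute the squares directly and use the algebraic identity $1-(\tfrac{2q}{1+q})^2=\tfrac{(1-q)(1+3q)}{(1+q)^2}\ge 1-q$; both routes give $\tilde\eta_j(x)\gtrsim(1-q)$.
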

\begin{proof}
Denote $v = \pi_{\CM}(x)$. We will a few times require in the following the bandwidth ratio
\begin{align*}
\frac{3h}{1-p} = \frac{36(q+1)}{(1-q)^2} \in \left[\frac{36}{(1-q)^2}, \frac{72}{(1-q)^2}\right).
\end{align*}
By construction $\eta_i(x) \neq 0$ implies $x \in B_{p\lreach{z_i}}(z_i)$ and $\textN{A(z_i)^\top(x-z_i)}_2 < h\delta$.
Thus, as soon as  $\delta < \frac{1-p}{3h}\reach$, which is implied by $\delta < \frac{1}{36}(1-q)^2\reach$,
 we have$\N{A(z_i)^\top(x-z_i)}_2 \leq \frac{1-p}{3}\reach$. Applying
Proposition \ref{prop:metric_equivalency} gives \eqref{eq:POU_guarantee} by
$$
d_{\CM(q)}(x, z_i) = d_{\CM}(v,z_i)\leq \frac{3h}{1-p}\delta \leq \frac{72}{(1-q)^2}\delta.
$$

\noindent
We now concentrate on the lower bound in \eqref{eq:POU_norm_bound}. Denote $j \in \argmin_{i \in \SN{\CZ}}d_{\CM(q)}(x,z_i)$. Since $\CZ$ is a maximal $\delta$-separated
set of $\CM$, we have $d_{\CM(q)}(x, z_j)\leq \delta$. Eqn. \eqref{eq:metric_upper_bound} in Proposition \ref{prop:metric_equivalency}
implies
\begin{align*}
\N{A(z_j)^\top(x-z_j)}_2  &\leq \left(1+\frac{\dist{x}{\CM}}{\lreach{v} - \delta}\right)\delta \leq \left(1+\frac{q\lreach{v}}{\lreach{v} - \delta}\right)\delta\\
&=\left(1+q\frac{1}{1 -\frac{\delta}{\lreach{v}}}\right)\delta \leq (1+2q)\delta \leq 3\delta,\qquad \textrm{provided } \delta < \frac{1}{2}\reach.
\end{align*}
Using the triangle inequality to get  $\N{x-z_j}_2 \leq \delta + \N{x-v}_2$
and the $1$-Lipschitz continuity of $\lreach{\cdot}$ in \eqref{eq:lipschitz_continuity_lreach}
to further bound $\N{x-v}_2 \leq q\lreach{v} \leq q(\delta + \lreach{z_j})$,
it follows that
\begin{align*}
\frac{\N{x-z_j}_2}{p\lreach{z_j}}\leq \frac{\delta + q\delta + q\lreach{z_j}}{p\lreach{z_j}}\leq \frac{q}{p} + \frac{1+q}{p\reach}\delta \leq \frac{q}{p} + \frac{4}{\reach}\delta
\end{align*}
Inserting the definition of the bandwidth parameter $h$, we thus obtain
\begin{equation}
\label{eq:aux_squaring_comment}
\begin{aligned}
1\! -\! \frac{\N{x-z_j}_2}{p\lreach{z_j}}\! -\! \frac{\N{A(z_j)^\top(x-z_j)}_2}{h\delta} &\geq 1\!-\! \frac{q}{p}-\!\frac{4}{\reach}\delta \!
- \frac{3}{h} \geq \frac{1}{2}\left(\!1\!-\!\frac{q}{p}\right) - \frac{4}{\reach}\delta.
\end{aligned}
\end{equation}
This is bounded from below by $\frac{1}{4}(1-qp^{-1})$ as soon as
\begin{align*}
\delta < \frac{\reach}{16}\left(1-\frac{q}{p}\right) = \frac{\reach}{16}\frac{1-q}{1+q}, \qquad \textrm{which is implied by}\qquad \delta<\frac{(1-q)\reach}{16}.
\end{align*}
Since squaring one of the subtracted
terms in \eqref{eq:aux_squaring_comment} reduces their size, we get the lower bound
$\N{\tilde \eta(x)}_1 \geq \tilde \eta_i(x) \geq \frac{1}{4}(1-pq^{-1})\geq 1/8(1-q)$.

\noindent
For the upper bound on $\N{\tilde \eta(x)}_1$ we notice that $\eta_i(x) \neq 0$ implies by Proposition \ref{prop:metric_equivalency}
\begin{align*}
h\delta > \N{A(z_i)^\top(x-z_i)}_2 \geq \frac{1-p}{3} d_{\CM(q)}(z_i, x) \qquad \textrm{ provided }\qquad \delta < \frac{(1-q)^2}{36}\reach.
\end{align*}
Thus, $\tilde \eta_i(x) \neq 0$ implies $d_{\CM(q)}(x,z_i) \leq 3h(1-p)^{-1}\delta$, i.e., all $z_i$'s contributing to
$\N{\tilde \eta}_1$ are contained within a geodesic ball of radius $3h(1-p)^{-1}\delta$ around $v$.
As soon as $ \frac{3h}{1-p}\delta < \frac{1}{4}\reach$, which is implied by $\delta < 288(1-q)^2\reach$,
we can then use part 5) of Lemma \ref{lem:auxiliary_results_diff_geom} to bound
$$
\SN{\CZ \cap B_{\CM, \frac{3h}{1-p}\delta}(v)} \leq C_d \left(5\frac{3h}{1-p}\sqrt{d}\right)^d
\leq C_d \left(5\frac{72}{(1-q)^2}\sqrt{d}\right)^d \lesssim C_q.
$$
Since each $\tilde \eta_i(x)$ is individually bounded by $1$, the upper bound on
$\N{\tilde \eta(x)}_1$ in \eqref{eq:POU_norm_bound} follows.
\end{proof}

We next show that $\eta$ can be uniformly approximated by a ReLU net of small complexity.

\begin{lemma}
\label{lem:existence_of_eta_network}
Assume the setting of Proposition \ref{prop:construction_partition_of_unity} with
$\delta < \reach/2$ and $\CM(q) \subseteq [0,1]^D$.
For all $\varepsilon \in (0,1)$ there exists a ReLU-net $\Phi$ with complexity bounded as in
\eqref{eq:bounds_architecture_1} such that
\begin{align}
\sup_{x\in \CM(q)}\N{\eta(x)-\Phi(x)}_1 \leq \varepsilon.
\end{align}
\end{lemma}
\begin{proof}
Recall that $\CZ = \{z_1,\ldots,z_{\SN{\CZ}}\} \subset \CM$ is a maximal $\delta$-separated
set of $\CM$ with respect to $d_{\CM}$ and that we have
$C_{q}^{-1} \lesssim \N{\tilde \eta(x)}_1 \lesssim C_{q}$ (see right hand side in \eqref{eq:POU_norm_bound}).
The proof is split into two parts. First, we describe
how to approximate $\tilde \eta_i$ for some $i \in [\SN{\CZ}]$, and afterwards
we describe how to combine the networks to approximate $\eta : \bbR^D \rightarrow \bbR^{\SN{\CZ}}$.

\noindent
\textbf{1. Approximating $\tilde \eta_i$:}
Let $\Theta$ be a ReLU net that approximates $\N{\cdot}_2^2$ over $[-1,1]^D$ to accuracy
$\tilde \varepsilon > 0$ (existence is proven in Lemma \ref{lem:relu_lp_norm}). Furthermore, let $\Psi_i$ realize $x \mapsto x-z_i$,
and $\Gamma_i$ realize $x \mapsto A(z_i)^\top (x-z_i)$. For
bandwidth parameters $p$ and $h$ as in Proposition \ref{prop:construction_partition_of_unity},
we then define a ReLU network
$$
\tilde \Phi_i(x) := \relu{1-\frac{\Theta(\Psi_i(x))}{(p\lreach{z_i})^2} - \frac{\Theta(\Gamma_i(x))}{(h\delta)^2}}.
$$
Comparing $\tilde \Phi_i$ with $\tilde \eta_i$ we obtain by $1$-Lipschitzness of the ReLU and the triangle inequality
\begin{equation}
\label{eq:approximation_tilde_eta}
\begin{aligned}
\sup_{x \in \CM(q)}\SN{\tilde \Phi_i(x) - \tilde \eta_i(x)} &\leq \SN{\frac{\Theta(\Psi_i(x))-\N{x-z_i}_2^2}{(p\lreach{z_i})^2}} + \SN{\frac{\Theta(\Gamma_i(x))-\N{A(z_i)^\top(x-z_i)}_2^2}{(h\delta)^2}}\\
&\leq \frac{\tilde \varepsilon}{(p\lreach{z_i})^2} + \frac{\tilde \varepsilon}{(h\delta)^2} \leq \left(\frac{1}{(p\reach)^2} + \frac{1}{(h\delta)^2}\right)
\leq \frac{5\tilde \varepsilon }{(\reach \delta)^2}
\end{aligned}
\end{equation}
where we used $x-z_i \in [-1,1]^D$ since $x,z_i \in [0,1]^D$, $p \geq 1/2$, and $h > 1$.
To compute the complexity of $\tilde \Phi_i$
we apply the rules of ReLU composition and linear combination in Lemma \ref{lem:cocatanation} and \ref{lem:relu_linear_combination},
and the complexity bounds in Lemma \ref{lem:relu_lp_norm}.
We have $L(\Theta \circ \Psi_i) \leq L(\Theta) + L(\Psi_i) \lesssim \log(D/\tilde \varepsilon)$,
$W(\Theta \circ \Psi_i) \lesssim D$, $P(\theta\circ\Psi_i) \lesssim D\log(D/\tilde \varepsilon)$,
and $B(\Theta \circ \Psi_i) \leq 1$,
and the same bounds hold for $\Theta \circ \Gamma_i$. Thus, by the rules of ReLU linear combination
in Lemma \ref{lem:relu_linear_combination} (the additional ReLU activation in the last layer does matters for the absolute bounds) we have
\begin{align*}
L(\tilde \Phi_i)\lesssim \log(D/\varepsilon^{-1}),\quad  W(\tilde \Phi_i) \lesssim D,\quad  P(\tilde \Phi_i)\lesssim D\log(D\tilde\varepsilon^{-1}),
\quad B(\tilde \Phi_i) \leq 1/(p\reach)^2 \vee 1/(h\delta)^2.
\end{align*}

\noindent
\textbf{2. Approximating $\eta$:}
Define now $\tilde \Phi(x) = (\tilde \Phi_1(x),\ldots,\tilde \Phi_{\SN{\CZ}}(x))$. Using \eqref{eq:approximation_tilde_eta} we note that
\begin{align}
\label{eq:aux_projection_2}
\SN{\textN{\tilde \Phi(x)}_{1} - \textN{\tilde \eta(x)}_1} \leq \textN{\tilde \Phi(x) - \tilde \eta(x)}_1
\leq \sum_{i=1}^{\SN{\CZ}}\textSN{\tilde \Phi_i(x) - \eta_i(x)} \leq \frac{5\SN{\CZ}\tilde \varepsilon }{(\reach \delta)^2}
\end{align}
Thus, with $C_{q}^{-1} \leq \N{\tilde \eta(x)}_1\leq C_{q}$
we get  $1/2 C_{q}^{-1} \leq \textN{\tilde \Phi(x)}_1\leq 2C_{q}$ for  $\tilde \varepsilon \leq (\reach \delta)^2/(10C_{q} \SN{\CZ})$. Now, let $\Lambda$ be a network
that approximates $\ell_1$-normalization up to $\varepsilon/2$ for inputs $u$ with
$(2C_{q})^{-1} \leq \N{u}_{1} \leq 2C_{q}$ as in Lemma \ref{lem:l1_normalization}.
Setting $\Phi(x) := \Lambda(\Phi(x))$, the approximation error be decomposed into
\begin{align*}
\N{\Phi(x) - \eta(x)}_1 &= \N{\Lambda(\tilde \Phi(x)) -\frac{\tilde \Phi(x)}{\textN{\tilde \Phi(x)}_1}}_1 + \N{\frac{\tilde \Phi(x)}{\textN{\tilde \Phi(x)}_1}-\eta(x)}_1 \leq \frac{\varepsilon}{2} + \N{\frac{\tilde \Phi(x)}{\textN{\tilde \Phi(x)}_1}-\frac{\eta(x)}{\textN{\tilde \eta(x)}_1}}_1.
\end{align*}
For the second term, by twice applying triangle inequalities and reusing \eqref{eq:aux_projection_2}, we obtain
\begin{align*}
\N{\frac{\tilde \Phi(x)}{\textN{\tilde \Phi(x)}_1} -  \frac{\tilde \eta(x)}{\N{\tilde \eta(x)}_1}}_1
&\leq \frac{\textN{\tilde \Phi(x) - \tilde \eta(x)}_1}{\N{\tilde \eta(x)}_1} + \frac{\textSN{\textN{\tilde \Phi(x)}_1 - \textN{\tilde \eta(x)}_1}}{ \N{\tilde \eta(x)}_1} \leq C_{q}\frac{10\SN{\CZ}\tilde \varepsilon }{(\reach \delta)^2}.
\end{align*}
Combining both bounds yields and setting $\tilde \varepsilon =  (\reach\delta)^2 \varepsilon/(20C_{q}\SN{\CZ})$ yields the result.

\noindent
Lastly, we bound the complexity of $\Phi$. Following the rules of ReLU compositions and combinations
in Lemma \ref{lem:cocatanation} and \ref{lem:relu_linear_combination}, and using the cardinality bound $\SN{\CZ}\lesssim C_{\CM}\delta^{-d}$
as in Lemma \ref{lem:auxiliary_results_diff_geom}, we have
\begin{align}
\nonumber
L(\Phi) &= L(\Lambda) + L(\tilde \Phi_1) \lesssim C_{q}^4 \log^2(C_{q}/\varepsilon) + \log\left(\frac{DC_{q}\SN{\CZ}}{(\reach \delta)^2\varepsilon}\right) \leq
C_{q}^4 \log^2\left(\frac{C_{q}}{\varepsilon}\right) + \log\left(\frac{DC_{q}C_{\CM}}{\reach^2 \delta^{d+2}\varepsilon}\right)\\
\nonumber
W(\Phi) &\lesssim\SN{\CZ}W(\tilde \Phi_1) \lesssim DC_{\CM}\delta^{-d},\\
\label{eq:bounds_architecture_1}
P(\Phi) &\lesssim P(\Lambda) + \SN{\CZ}P(\tilde \Phi_1) \lesssim C_{q}^4 \SN{\CZ}\log^2\left(\frac{C_{q}}{\varepsilon}\right)
+ \SN{\CZ} D \log\left(\frac{DC_{q}\SN{\CZ}}{\reach^2\delta^2\varepsilon}\right)\\
\nonumber
&\lesssim C_{q}^4 C_{\CM}\delta^{-d}\log^2\left(\frac{C_{q}}{\varepsilon}\right) + D\delta^{-d}\log\left(\frac{DC_{q}C_{\CM}}{\reach^2\delta^{2+d}\varepsilon}\right),\\
\nonumber
\boundParams{\Phi} &= \boundParams{\tilde \Phi_1}\lesssim \reach^{-2} \vee \delta^{-2}.
\end{align}
\end{proof}

Finally we combine Lemma \ref{lem:existence_of_eta_network} with the $\alpha$-H\"older property of $g$ to conclude the proof.

\paragraph{Proof of Theorem \ref{thm:approximating_function_projection}}
Let $\CZ := \{z_1,\ldots,z_K\}$ be a maximal separated $\varepsilon$-net  of $\CM$ with
$K :=\SN{\CZ}\lesssim C_{\CM}\varepsilon^{-d}$ by Lemma \ref{lem:auxiliary_results_diff_geom} and let
$g(Z) = (g(z_1),\ldots,g(z_K)) \in \bbR^{K}$. By Lemma \ref{lem:existence_of_eta_network},
we can construct a network $\Theta : \bbR^D \rightarrow \bbR^K$, which approximates the partition of unity
function $\eta(x)$ in \eqref{eq:POU_guarantee} over $\CA \subseteq \CM(q)$ up to accuracy $\varepsilon^{\alpha}$.
To approximate the target $f$ we define the net
\begin{align}
\label{eq:network_construction_projection}
\Psi(x) := \sum_{i=1}^{K}\relu{g(z_i)\Theta_i(x)} - \relu{-g(z_i)\Theta_i(x)} = \langle g(Z), \Theta(x)\rangle.
\end{align}
Taking arbitrary $x \in \CA \subseteq \CM(q)$,
we can first use triangle and H\"older inequalities to get
\begin{align*}
\SN{f(x)-\Phi(x)} &= \SN{g(\pi_{\CM}(x))- \langle g(Z), \Theta(x)\rangle} \\
&\leq
\SN{g(\pi_{\CM}(x))-\langle g(Z), \eta(x)\rangle} + \SN{\langle g(Z), \left(\eta(x)-\Theta(x)\right)\rangle}\\
&\leq \SN{\langle g(\pi_{\CM}(x))\mathbbm{1}_K - g(Z), \eta(x)\rangle} +\N{g(Z)}_{\infty} \N{\eta(x) - \Theta(x)}_{1}\\
&\leq \SN{\langle g(\pi_{\CM}(x))\mathbbm{1}_K - g(Z), \eta(x)\rangle} + \varepsilon^{\alpha}.
\end{align*}
where $\mathbbm{1}_K = [1,\ldots,1]\in \bbR^{K}$ and where we used $\langle \mathbbm{1}_K, \eta(x)\rangle = 1$, $ \N{g(Z)}_{\infty} \leq 1$.
The first term can be bounded by H\"older's inequality and Proposition \ref{prop:construction_partition_of_unity} according to
\begin{align*}
\SN{\langle g(\pi_{\CM}(x))\mathbbm{1}_K- g(Z), \eta(x)\rangle} \leq \sup_{\substack{i \in [K]\\ \eta_i(x)\neq 0}}\SN{g(\pi_{\CM}(x)) - g(z_i)} \leq L\sup_{\substack{i \in [K]\\ \eta_i(x)\neq 0}}d_{\CM}^{\alpha}(\pi_{\CM}(x), z_i)
\lesssim L\left(\frac{1}{(1-q)^2}\varepsilon\right)^{\alpha},
\end{align*}
which shows the approximation error bound. To bound the complexity of $\Phi$ we note
that the network is a composition of $\Theta$ with a two-layer network that has first layer
weights $\pm g(Z) \in [-1,1]^K$ and second layer weights $\pm 1$. Therefore, the complexity of $\Phi$
is dominated by the complexity of $\Theta$ and can be read off from
Lemma \ref{lem:existence_of_eta_network}, respectively, from \eqref{eq:bounds_architecture_1} in the proof. \qed

\begin{proof}[Proof of Corollary \ref{cor:approximation_of_projection_operator}]
For each $k \in [D]$ we can approximate $x\mapsto  e_k^\top \pi_{\CM}(x)$ via
a ReLU net $\Phi_k$ using Theorem \ref{thm:approximating_function_projection} for $g(\cdot) = e_k^\top (\cdot)$. By stacking
these networks, we obtain the approximating network $\Phi(x) =(\Phi_1(x),\ldots,\Phi_D(x))$, which
achieves the asserted guarantee. Note that by construction \eqref{eq:network_construction_proof_sketch} we have $\Phi_k(x) = \sum_{i=1}^{K} (z_i)_k \Theta_i(x)$.
Since $\Theta_i$ is independent of $k$,
corresponding weights can be shared in constructing $\Phi$, so that the network architecture adheres
to the bounds in Theorem \ref{thm:approximating_function_projection}.
\end{proof}

\section{Proof of Theorem \ref{thm:approximating_function_distance}}
\label{subsec:proof_approximation_distance}
Theorem \ref{thm:approximating_function_distance} follows by the ReLU composition rule
after approximating $g$ and $x\mapsto \min_{z \in \CC}\normaldist{x}{z}^\expmet$. Let us separately prove the latter
result now and then give the proof of Theorem \ref{thm:approximating_function_distance}.

\begin{lemma}
\label{lem:approximating_distance_operator}
Let $\CC \subseteq [0,1]^D$ be nonempty and closed,  $\normaldistnoargs : [0,1]^D \times [0,1]^D \rightarrow [0,1]$ a metric
satisfying \eqref{eq:assumption_distance_relu_approximable} for some $\expmet \geq 1$, and assume there exists $\delta_0 > 0$ so that
$\CP(\delta, \CC, \normaldistnoargs ) \lesssim \delta^{-d}$ for all $\delta \in (0,\delta_0)$.
For any $\varepsilon \in (0,2\expmet\delta_0)$ there exists a ReLU network $\Phi$ with $L(\Phi) \lesssim d\log(\expmet\varepsilon^{-1})+ L_{\normaldistnoargs}(\varepsilon)$,
$W(\Phi) \lesssim (\expmet/\varepsilon)^{d} W_{\normaldistnoargs}(\varepsilon)$, $P(\Phi) \lesssim (\expmet/\varepsilon)^{d}\left(d\log(\expmet\varepsilon^{-1}) + P_{\normaldistnoargs}(\varepsilon)\right)$,
and $\boundParams{\Phi} \leq 1\vee \boundParamsnoargs_{\normaldistnoargs}(\varepsilon)$ satisfying
\begin{equation}
\label{eq:distance_function_approx}
\sup_{x \in [0,1]^D}\SN{\min_{z \in \CC}\normaldist{x}{z}^\expmet - \Phi(x)} \leq 2 \varepsilon.
\end{equation}
\end{lemma}
\begin{proof}
Let $\CZ \subset \CC$ be a maximal separated $\varepsilon/\expmet$-net of $\CC$, which has cardinality
bounded according to $\SN{\CZ}\lesssim (\expmet/\varepsilon)^{d}$ as soon as $\varepsilon < \expmet\delta_0$.
For each $z \in \CZ$, let $\Psi_{z_i,\varepsilon}$ be a ReLU network that approximates $\normaldist{x}{z_i}^\expmet$ up to accuracy $\varepsilon$
and let $\Gamma : \bbR^{\SN{\CZ}} \rightarrow \bbR$ be a network that realizes
$\Gamma(u) = \min_{i \in [\SN{\CZ}]}u_i$ (see Lemma \ref{lem:relu_calculus_minimum}).
We set $\Phi(x) = \Gamma(\Psi_{z_1,\varepsilon},\ldots,\Psi_{z_K,\varepsilon})$.
Using the triangle inequality, we decompose
\begin{align}
\nonumber
\SN{\min_{z \in \CC}\normaldist{x}{z}^\expmet- \Phi(x)} &\leq \SN{\min_{z \in \CZ}\normaldist{x}{z}^\expmet -\Phi(x)} + \SN{\min_{z \in \CZ}\normaldist{x}{z}^\expmet - \min_{z \in \CC}\normaldist{x}{z}^\expmet}\\
\label{eq:initial_decomposition}
&\leq \SN{\min_{z \in \CZ}\normaldist{x}{z}^\expmet-\min_{z \in \CZ}\Psi_{z,\varepsilon}(x)} + \SN{\min_{z \in \CZ}\normaldist{x}{z}^\expmet - \min_{z \in \CC}\normaldist{x}{z}^\expmet}
\end{align}
For the first term, we immediately have
\begin{align*}
\SN{\min_{z \in \CZ}\normaldist{x}{z}^\expmet-\min_{z \in \CZ}\Psi_{z,\varepsilon}(x)} \leq \max_{z \in \CZ}\SN{\normaldist{x}{z}^\expmet-\Psi_{z,\varepsilon}(x)}\leq \varepsilon.
\end{align*}
For the second term in \eqref{eq:initial_decomposition} we note that there exists $v(x) \in \CC$ satisfying
$\normaldist{x}{v(x)} = \min_{z \in \CC}\normaldist{x}{z}$ because $\CC$ is closed and nonempty ($v(x)$ does need to be unique). Then,
by $\SN{a^\expmet - b^\expmet}\leq \expmet(a \vee b)^{(\expmet-1)}\SN{a-b}$, $\normaldist{\cdot}{\cdot}\in [0,1]$,
and the inverse triangle inequality
we have
\begin{align*}
\SN{\min_{z \in \CZ}\normaldist{x}{z}^\expmet - \min_{z \in \CC}\normaldist{x}{z}^\expmet} &= \SN{\min_{z \in \CZ}\normaldist{x}{z}^\expmet - \normaldist{x}{v(x)}^\expmet} \leq \expmet \min_{z \in \CZ}\normaldist{z}{v(x)}\leq \varepsilon,
\end{align*}
with the last inequality following by the $\varepsilon/\expmet$ covering property of $\CZ$.
It remains to bound complexity
of the network in terms of $\varepsilon$ and $\CZ$. Using the rules
of compositions and linear combinations of networks in Lemma \ref{lem:cocatanation} and \ref{lem:relu_linear_combination}
we have
\begin{align*}
L(\Phi) &= L(\Gamma) + L((\Psi_{z_1,\varepsilon},\ldots\Psi_{z_K,\varepsilon})) \lesssim \log(\SN{\CZ}) + L(\Psi_{z_1,\varepsilon/2}) \lesssim d\log(\expmet\varepsilon^{-1})+ L_{\normaldistnoargs}(\varepsilon)\\
W(\Phi) &= \max\{W(\Gamma), W((\Psi_{z_1,\varepsilon},\ldots\Psi_{z_K,\varepsilon})), 2\SN{\CZ}\} \lesssim \SN{\CZ} W(\Psi_{z_1,\varepsilon}) \lesssim (\expmet/\varepsilon)^{d} W_{\normaldistnoargs}(\varepsilon),\\
P(\Phi) &\lesssim P(\Gamma) + P((\Psi_{z_1,\varepsilon},\ldots\Psi_{z_K,\varepsilon}))\lesssim P(\Gamma) + \SN{\CZ}P(\Psi_{z_1,\varepsilon}) \\
&\lesssim \SN{\CZ}\log(\SN{\CZ}) + \SN{\CZ}P(\Psi_{z_1,\varepsilon}) \lesssim  (\expmet/\varepsilon)^{d}\left(d\log(\expmet\varepsilon^{-1}) + P_{\normaldistnoargs}(\varepsilon)\right),\\
\boundParams{\Phi} &\leq \boundParams{\Gamma} \vee \boundParams{(\Psi_{z_1,\varepsilon},\ldots\Psi_{z_K,\varepsilon}))} \leq 1 \vee \boundParamsnoargs_{\normaldistnoargs}(\varepsilon).
\end{align*}
\end{proof}

\noindent
To prove Theorem \ref{thm:approximating_function_distance} we now combine Lemma \ref{lem:approximating_distance_operator} with Theorem \ref{thm:smooth_function_approx}
in Section \ref{sec:preliminaries}, which provides approximation bounds for
univariate $\alpha$-H\"older functions like $g_1,\ldots, g_M$.

\paragraph{Proof of Theorem \ref{thm:approximating_function_distance}}
Consider the case $M = 1$ first and let $g_1 = g$, $\CC_1 = \CC$.
Let $\Psi : \bbR^D \rightarrow \bbR$ be the ReLU net approximating $x\mapsto \min_{z \in \CC}\normaldist{x}{z}^\expmet$ up to accuracy $\varepsilon$ according to Lemma
\ref{lem:approximating_distance_operator}, with $\varepsilon < 2p\delta_0$,  and let $\Theta: \bbR\rightarrow \bbR $ be a ReLU net
that realizes $\Theta(t) = 1\wedge t= 1 -\relu{1-t}$. Furthermore, by Theorem \ref{thm:smooth_function_approx} there exists a ReLU network $\Omega$ that approximates $g$ to accuracy
$L\varepsilon^{\alpha}$ over $[0,1]$. We define the overall approximation by
$\Phi(x) := \Omega(\Theta(\Psi(x)))$ and compute
\begin{align}
\nonumber
&\SN{g\left(\min_{z \in \CC}\normaldist{x}{z}^\expmet\right)-\Phi(x)} = \SN{g\left(\min_{z \in \CC}\normaldist{x}{z}^\expmet\right)-\Omega(\Theta(\Psi(x)))}\\
\nonumber
&\quad\quad\leq \SN{g\left(\min_{z \in \CC}\normaldist{x}{z}^\expmet\right)-g(\Theta(\Psi(x)))} +  \SN{g(\Theta(\Psi(x))) - \Omega(\Theta(\Psi(x)))}\\
\label{eq:decomposition_distance_aux}
&\quad\quad\leq \SN{g\left(\min_{z \in \CC}\normaldist{x}{z}^\expmet\right)-g(\Theta(\Psi(x)))} +  L\varepsilon^{\alpha},
\end{align}
where we used $\Theta(\Psi(x)) \in [0,1]$ by construction and the approximation guarantees about $\Omega$ in the last step.
For the first term in \eqref{eq:decomposition_distance_aux}, we use the $\alpha$-H\"older property
of $g$ to get
\begin{align*}
\SN{g\left(\min_{z \in \CC}\normaldist{x}{z}^\expmet\right) -  g(\Theta(\Psi(x)))} &\leq L \SN{\min_{z \in \CC}\normaldist{x}{z}^\expmet - \Theta(\Psi(x))}^{\alpha} = L\SN{\min_{z \in \CC}\normaldist{x}{z}^\expmet - 1 \wedge \Psi(x)}^{\alpha} \\
&\leq L \SN{\min_{z \in \CC}\normaldist{x}{z}^\expmet - \Psi(x)}^{\alpha} \leq L(2\varepsilon)^{\alpha}\lesssim L\varepsilon^{\alpha},
\end{align*}
where the second to last inequality is an equality if $\Psi(x) < 1$, and follows from $\normaldist{x}{z} \leq 1$
if $\Psi(x) \geq 1$.
To bound the complexity of $\Phi$ we will use the rules of compositions according to Lemma \ref{lem:cocatanation}.
We have $\boundParams{\Phi} \leq \max\{\boundParams{\Omega}, \boundParams{\Theta},\boundParams{\Psi} \} \leq 1 \vee \boundParamsnoargs_{\normaldistnoargs}(\varepsilon)$ and
\begin{align*}
L(\Phi) &\leq L(\Omega) + L(\Theta) + L(\Psi) \lesssim \log(\varepsilon^{-1}) +  d\log(p\varepsilon^{-1})+ L_{\normaldistnoargs}(\varepsilon) \lesssim
d\log(p\varepsilon^{-1}) + L_{\normaldistnoargs}(\varepsilon),\\
W(\Phi) &\leq \max\{W(\Omega), W(\Theta), W(\Psi)\} \lesssim \varepsilon^{-1} + p^d \varepsilon^{-d} W_{\normaldistnoargs}(\varepsilon) \lesssim \expmet^d\varepsilon^{-(1 \vee d)}W_{\normaldistnoargs}(\varepsilon),\\
P(\Phi) &\lesssim P(\Omega) + P(\Theta) + P(\Psi) \lesssim \log(\varepsilon^{-1})\varepsilon^{-1} + p^d\varepsilon^{-d}\left(d\log(p\varepsilon^{-1}) +
P_{\normaldistnoargs}(\varepsilon)\right)\\
& \lesssim p^dd\log(p\varepsilon^{-1})\varepsilon^{-(1\vee d)} + p^d\varepsilon^{- d}P_{\normaldistnoargs}(\varepsilon).
\end{align*}
For the case $M > 1$ we construct networks $\Phi_{\ell}$ approximating $g_{\ell}(\normaldist{x}{\CC_{\ell}})$
to accuracy $L \varepsilon^{\alpha}$ each, and then use $x \mapsto \sum_{i=1}^{M}\Phi_{\ell}(x)$, which can be realized
by a ReLU net according to Lemma \ref{lem:relu_linear_combination}. The error follows from the triangle inequality and
the dimensions can be deduced from Lemma \ref{lem:relu_linear_combination}.
\qed

\section{Conclusion and future directions}
\label{sec:conclusions}
In this work we study the uniform approximation of certain compositional functions by deep ReLU networks.
The considered function classes are motivated by practical examples and generalize some frequently studied function classes,
including functions defined on low-dimensional domains.
We have proven uniform approximation guarantees with moderately deep networks,
a near-optimal dependency on the number of nonzero network parameters, and optimal
dependency on the number of required function queries. Our results suggest that local invariances
encoded in the mapping $x \mapsto f(x)$ drive the approximation complexity rather than
the complexity of the domain of the target.

We plan to extend our guarantees to projection-based functions $f(x) = g(\tilde \pi_{\CM}(x))$
using projections $\tilde \pi_{\CM}(x) = \argmin_{x \in \CM}d(x,z)$ based on other metrics $d$ and
less regular sets $\CM$.
This allows for considering  more general nonlinear reduction maps $\phi$
and thus further enhances our knowledge about the adaptivity of deep networks. Furthermore
we plan to study the influence of the domain of the target (or more practically a given data set)
 on the training process of deep networks. While approximability
is not crucially dependent on the data domain according to our results,
training deep networks via backpropagation may still be affected by the domain of the data.

\section{Appendix}
\label{sec:appendix}
\subsection{Proof of Lemma \ref{lem:combined_unique_projection_lipschitz_property}}
\label{subsec:proofs_sec_2}
For the proof we recall simplified version of \cite[Theorem 4.8]{federer1959curvature} tailored to manifolds.
\begin{theorem}[{\cite[6) and 7) in Theorem 4.8]{federer1959curvature}}]
\label{thm:federer_reference}
Let $\CM \subset \bbR^D$ be a compact submanifold of $\bbR^D$.

\noindent
1) Let $v \in \CM$ and $x \in \bbR^D$ so that $\sup_{t\geq 0}\{\pi_{\CM}(v+tx) = v\} \in (0,\infty)$,
then $v + rx \not \in \textrm{Int}(\medial^C)$.

\noindent
2) Let $x \in \medial^C$ and $\lreach{\pi_{\CM}(x)} > 0$. Then for any $z \in \CM$ we have
\begin{align*}
\langle x-\pi_{\CM}(x), \pi_{\CM}(x) - z\rangle \geq - \frac{\N{\pi_{\CM}(x) - z}_2^2\N{x-\pi_{\CM}(x)}}{2\lreach{\pi_{\CM}(x)}}.
\end{align*}
\end{theorem}

\begin{proof}[Proof of Lemma \ref{lem:combined_unique_projection_lipschitz_property}]
\textbf{Part 1:} We first note that $\dist{x}{\CM} \leq \N{x - v}_2 < q \lreach{v}\leq \lreach{v}$, which implies
$x \not\in \medial$, and thus there exists a unique projection $\pi_{\CM}(x)$ according to the construction of $\medial$.
To show $\pi_{\CM}(x) = v$, we consider
a proof by contradiction. Assume $\pi_{\CM}(x) \neq v$ and denote
$$
l := \sup_{t \geq 0}\left\{\pi_{\CM}\left(v+t\frac{u}{\N{u}_2}\right) = v\right\}.
$$
We have $l > 0$, since $u \perp \Im(A(v))$ and $\reach > 0$ (see for instance \cite[Section 4]{niyogi2008finding}),
and $l < q\lreach{v}$, since $\pi_{\CM}(x) \neq v$. By part 1) in Theorem \ref{thm:federer_reference}
we get $w := v + l\frac{u}{\N{u}_2} \not\in \textrm{Int}(\medial^C)$. Therefore, for any $\varepsilon > 0$
there exists, with $B_{\varepsilon}(w)$ being a Euclidean ball of radius $\varepsilon$ around $w$,
$$
y \in B_{\varepsilon}(w) \cap \left(\textrm{Int}(\medial^C)\right)^C =  B_{\varepsilon}(w) \cap \textrm{cl}(\medial).
$$
Using the existence of such a $y$ for every $\varepsilon > 0$, we get
\begin{align*}
\lreach{v}  \leq \N{v-y}_2 &\leq  \N{v-w}_2 + \N{w - y}_2 \leq \N{v-x}_2 + \N{w - y}_2 < q\lreach{v} + \varepsilon.
\end{align*}
Letting $\varepsilon \rightarrow 0$ and recalling $q<1$, this is a false statement.

\noindent
\textbf{Part 2:}
Using part 2) of Theorem \ref{thm:federer_reference}, we have for any $x \in \CM(q)$ and $v \in \CM$
\begin{align}
\label{eq:auxilairy_result_reach_lipschitz}
\langle x-\pi_{\CM}(x), \pi_{\CM}(x) - v\rangle \geq - \frac{\N{\pi_{\CM}(x) - v}_2^2\N{x-\pi_{\CM}(x)}}{2\lreach{\pi_{\CM}(x)}}.
\end{align}
Taking arbitrary $x,x' \in \CM(q)$ we obtain by the Cauchy-Schwartz inequality and \eqref{eq:auxilairy_result_reach_lipschitz}
\begin{align*}
&\N{x-x'}_2\N{\pi_{\CM}(x)-\pi_{\CM}(x')}_2 \geq \langle x-x', \pi_{\CM}(x)-\pi_{\CM}(x')\rangle\\
&\qquad=\langle x-\pi_{\CM}(x) + \pi_{\CM}(x)-\pi_{\CM}(x')+\pi_{\CM}(x')-x', \pi_{\CM}(x)-\pi_{\CM}(x')\rangle\\
&\qquad\geq \N{\pi_{\CM}(x)-\pi_{\CM}(x')}_2^2\left(1-\frac{1}{2}\frac{\N{x-\pi_{\CM}(x)}_2}{\lreach{\pi_{\CM}(x)}}-\frac{1}{2}\frac{\N{x'-\pi_{\CM}(x')}_2}{\lreach{\pi_{\CM}(x')}}\right) \\
&\qquad= \N{\pi_{\CM}(x)-\pi_{\CM}(x')}_2^2(1-q),
\end{align*}
where we used $x,x' \in \CM(q)$ in the last inequality.
\end{proof}

\subsection{Additional result from ReLU calculus}
\label{subsec:relu_calculus}
In this section we prove the approximation guarantees listed in Table \ref{tab:relu_calculus}.

\begin{lemma}[$p$-th power of $L_p$-norm]
\label{lem:relu_lp_norm}
Let $p \in \bbN$, $\varepsilon, R > 0$. There exists a ReLU network $\Phi$ with
$L(\Phi) \lesssim p^2\log(\lceil R\rceil D/\varepsilon)$, $W(\Phi) \leq 9D$, $P(\Phi) \lesssim D p^2\log(\lceil R\rceil D/\varepsilon)$
and $\boundParams{\Phi}\leq 1$ such that
\begin{align*}
\sup_{x \in [-R,R]^D} \SN{\N{x}_p^p - \Phi(x)} \leq \varepsilon.
\end{align*}
Furthermore, $\N{x}_1$ can be realized exactly with $L(\Phi) = 2$, $W(\Phi)=2D$, $P(\Phi) = 4D$ and $\boundParams{\Phi} = 1$.
\end{lemma}
\begin{proof}
Following part 3) in Lemma \ref{lem:grohs_summary}, there exists a ReLU network $\Gamma$ that approximates
$t \mapsto t^p$ to accuracy $\varepsilon D^{-1}> 0$ on $[-R,R]$.
Set $\Phi(x) := \sum_{i=1}^{D} \Gamma(x_i)$. For arbitrary $x \in [-R,R]^D$ we have
\begin{align*}
\SN{\N{x}_p^p - \Phi(x)} \leq \sum_{i=1}^{D}\SN{x_i^p - \Theta(x_i)} \leq \varepsilon.
\end{align*}
The complexity of $\Phi$ is bounded according to the rules in Lemma \ref{lem:cocatanation}, \ref{lem:relu_linear_combination}
and the bounds in Lemma \ref{lem:grohs_summary} for the network $\Gamma$. We obtain
$\boundParams{\Phi} \leq \max\{1,\boundParams{\Gamma}\} = 1$, $W(\Phi) = D (2\vee W(\Gamma)) \leq 9D$, and
\begin{align*}
L(\Phi) &= L(\Gamma) \lesssim p (\log(D/\varepsilon) + p\log(\lceil R \rceil)) \lesssim p^2\log(\lceil R\rceil D/\varepsilon) \\
P(\Phi) &= D (P(\Gamma) + W(\Gamma) + 1) \lesssim DL(\gamma) \leq D p^2\log(\lceil R\rceil D/\varepsilon).
\end{align*}
For $p = 1$ we notice $\N{x}_1 = \sum_{i=1}^{D}\SN{x_i} = \sum_{i=1}^{D}\relu{x_i} - \relu{-x_i}$,
which defines a shallow network with width $2D$ and $4D$ nonzero parameters.
\end{proof}

\begin{lemma}[Multiplication]
\label{lem:multiplication_network}
Let $\varepsilon \in (0,\frac{1}{2})$ and $a > 0$. There exists a ReLU network $\Phi : \bbR^D \times \bbR \rightarrow \bbR^D$
with $L(\Phi)\lesssim \log(a^2 \varepsilon^{-1})$, $W(\Phi)\leq 5 D$, $P(\Phi) \lesssim D\log(a^2 \varepsilon^{-1})$
and $\boundParams{\Phi}\leq 1$ with
\begin{align*}
\sup_{\N{x}_{\infty} \leq a,\ \SN{y}\leq a}\N{\Phi(x,y) - xy}_{\infty}\leq \varepsilon.
\end{align*}
\end{lemma}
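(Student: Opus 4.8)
The plan is to reduce vector--scalar multiplication to the already-available approximation of the univariate square, applied coordinatewise through a polarization identity. Since the output $xy \in \bbR^D$ has entries $(xy)_i = x_i y$, I would use
\[
x_i y = \tfrac12\left((x_i+y)^2 - x_i^2 - y^2\right),
\]
so that it suffices to approximate the three squares $(x_i+y)^2$, $x_i^2$ and $y^2$ for every $i \in [D]$. For the magnitudes at hand, $\SN{x_i} \le a$, $\SN{y} \le a$, and $\SN{x_i+y} \le 2a$, I would build, from the unit-interval square net of \cite{grohs2019deep} (width $4$, logarithmic depth) preceded by the exact realization $t \mapsto \SN{t} = \relu{t} + \relu{-t}$, a rescaled block $S_M(t) = (M\vee 1)^2\,\mathrm{sq}\!\left(\SN{t}/(M\vee 1)\right)$ that approximates $t \mapsto t^2$ on $[-M,M]$. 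Taking $M = 2a$ for the mixed term and $M = a$ for the pure terms keeps every input weight bounded by $1$ while producing output scalings of size at most $2a^2$.

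For the error I would fix the accuracy of the inner square net so that each block $S_M$ approximates $t \mapsto t^2$ to additive accuracy $\tfrac{2}{3}\varepsilon$ on its interval; the triangle inequality applied to the polarization identity then yields $\SN{\Phi_i(x,y) - x_i y} \le \tfrac12 \cdot 3 \cdot \tfrac23 \varepsilon = \varepsilon$ for each coordinate, hence the claimed $\N{\cdot}_\infty$ bound. Achieving accuracy $\tfrac23\varepsilon$ on $[-2a,2a]$ forces the unit-interval square net to reach accuracy $\asymp \varepsilon/a^2$, which by \cite{grohs2019deep} costs depth $\CO(\log(a^2\varepsilon^{-1}))$ and $\CO(\log(a^2\varepsilon^{-1}))$ parameters; the pure-term blocks, sitting on the smaller interval $[-a,a]$, are no more expensive.

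Finally I would assemble the network by computing, in parallel, the $3D$ scalar squares $(x_i+y)^2, x_i^2, y^2$ ($i \in [D]$) as a single linear-combination network (Lemma \ref{lem:relu_linear_combination}), each summand being the concatenation (Lemma \ref{lem:cocatanation}) of the affine map feeding $S_M$ with the square block, and then forming $\Phi_i = \tfrac12\left(S_{2a}(x_i+y) - S_a(x_i) - S_a(y)\right)$ in the last, linear, layer. The depth is that of one square block, $L(\Phi) \lesssim \log(a^2\varepsilon^{-1})$; since each square-with-absolute-value block has width $4$ and there are $3D$ of them, $W(\Phi) \le 12D$; the parameter count is $3D$ times the per-block count, giving $P(\Phi) \lesssim D\log(a^2\varepsilon^{-1})$. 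For the weight bound, the input weights and the $\tfrac12$-combination weights are $\le 1$, the internal square weights are $\le 4$, and the output rescalings are $\le 2a^2$, so $\boundParams{\Phi} \le 4 \vee 2\lceil a^2 \rceil$.

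The one place that needs care, and the main obstacle, is the simultaneous control of depth and weight magnitude under rescaling: normalizing the arguments by $2a$ to land in $[0,1]$ would introduce input weights $1/(2a)$ that blow up for small $a$, whereas not normalizing would force the square net to operate on an interval larger than $[0,1]$, where the cited bound does not directly apply. The $(M\vee 1)$ scaling resolves this via an implicit case split at $a \asymp 1$: for $a \le \tfrac12$ the arguments already lie in $[-1,1]$ and no input rescaling is needed, while for $a > \tfrac12$ rescaling by $M \vee 1 = 2a$ keeps input weights below $1$ and output weights at $\CO(a^2)$. Everything else is routine bookkeeping through Lemmas \ref{lem:cocatanation} and \ref{lem:relu_linear_combination}.
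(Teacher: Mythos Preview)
Your proposal is correct and follows essentially the same approach as the paper: the paper simply cites the scalar multiplication network from \cite[Proposition~3.2]{grohs2019deep} (which is itself built via the polarization identity you wrote out) and then parallelizes it across the $D$ coordinates using Lemma~\ref{lem:relu_linear_combination}, whereas you unpack that proposition explicitly. Your careful handling of the rescaling for small versus large $a$ via the $(M\vee 1)$ trick is exactly what is needed to match the stated weight bound $\boundParams{\Phi}\le 4\vee 2\lceil a^2\rceil$.
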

\begin{proof}
By part b) of Lemma \ref{lem:grohs_summary} there exists a ReLU net $\Psi : \bbR^2\rightarrow \bbR$
approximating $xy$ up to accuracy $\varepsilon$ on $[-a,a]^2$. We set
$\Phi(x,y) = (\Psi(x_1,y),\ldots, \Psi(x_D,y))$, which can be realized by a ReLU net (Lemma \ref{lem:relu_linear_combination}).
Furthermore, using dimension bounds in Lemma \ref{lem:grohs_summary},
we get $L(\Phi) = L(\Psi) \lesssim \log(a^2 \varepsilon^{-1})$, $W(\Phi) \leq DW(\Psi) \leq 5D$,
$P(\Phi) = D\left(P(\Psi) + W(\Psi) + 1\right) \lesssim D\log(a^2 \varepsilon^{-1})$ and $\boundParams{\Phi} \leq 1 \vee \boundParams{\Psi} \leq 1$.
\end{proof}
\begin{lemma}[Division]
\label{lem:division}
Let $\varepsilon \in (0,1)$ and $a \in \bbR_{\geq 1}$. There exists a network $\Phi : \bbR \rightarrow \bbR$
with $L(\Phi)\lesssim a^4\log^2(a/\varepsilon)$, $W(\Phi) \leq 9$,
$P(\Phi) \lesssim a^4\log^2(a/\varepsilon)$ and $\boundParams{\Phi} \leq 1$,
so that
\begin{align*}
\sup_{t \in \left[\frac{1}{a},a\right]}\SN{\Phi(t) - \frac{1}{t}} \leq \varepsilon .
\end{align*}
\end{lemma}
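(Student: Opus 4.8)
The plan is to realize $1/t$ through a truncated geometric series evaluated by a Horner recursion, with each product supplied by the multiplication gadget of Lemma~\ref{lem:multiplication_network}.

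First I would reduce to a convergent regime by rescaling. Since $a\ge 1$, setting $s:=t/a\in[a^{-2},1]$ and $u:=1-s=1-t/a\in[0,1-a^{-2}]$ is an exact affine (single-layer) operation with weight $a^{-1}$, and $\tfrac1t=\tfrac1a\cdot\tfrac1s=\tfrac1a\sum_{k\ge0}u^k$, where the series converges because the ratio $u$ is bounded by $1-a^{-2}<1$. I would truncate at $n\asymp a^2\log(a\varepsilon^{-1})$ terms; the tail is $\tfrac1a\,u^{n+1}/s\le a(1-a^{-2})^{n+1}\le a\,e^{-(n+1)/a^2}$, which is $\le \varepsilon/2$ exactly when $n+1\gtrsim a^2\log(a\varepsilon^{-1})$. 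This is the step where the factor $a^2$ in the depth enters, through the geometric ratio $1-a^{-2}$.

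Next I would evaluate the partial sum by the scaled Horner recursion $r_0=a^{-2}$, $r_j=a^{-2}+u\,r_{j-1}$, so that $r_j=a^{-2}\sum_{k=0}^j u^k\in[a^{-2},1]\subseteq[0,1]$ and $\tfrac1t=a\,r_n+(\text{tail})$. The point of the $a^{-2}$-scaling is that every multiplication $u\cdot r_{j-1}$ has both arguments in $[0,1]$, so each product can be approximated by a copy of the network from Lemma~\ref{lem:multiplication_network} with input bound $1$ --- in particular with parameter magnitude at most $4$ and depth $\CO(\log(\delta_m^{-1}))$, where $\delta_m$ is the per-step accuracy. Carrying $u$ and the running value $r_{j-1}$ alongside the gadget keeps the width at a constant ($\le 16$). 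The final factor $a$ I would implement by $\lfloor\log_2 a\rfloor+1$ additional doubling layers (weights $\le 2$) together with one multiplication by $a\,2^{-\lfloor\log_2 a\rfloor}\in[1,2)$, which only adds $\CO(\log a)$ to the depth and keeps all these weights below $8$; combined with the weight $a^{-1}$ from the first step this yields $\boundParams{\Phi}\le 8\vee a^{-1}$.

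The crux is the error analysis along the chain of $n$ approximate multiplications, and this is the step I expect to be the main obstacle. Writing $\hat r_j$ for the computed values and using $u\le 1$, the Horner update is non-expansive, so $\SN{\hat r_j-r_j}\le u\,\SN{\hat r_{j-1}-r_{j-1}}+\delta_m\le \SN{\hat r_{j-1}-r_{j-1}}+\delta_m$ and the errors merely add: $\SN{\hat r_n-r_n}\le n\delta_m$. Since the output is $a\hat r_n$, choosing $\delta_m\asymp \varepsilon/(a n)$ controls the propagated error by $\varepsilon/2$, so the total error (truncation plus propagation) is at most $\varepsilon$; this also guarantees that each $\hat r_j$ stays within $[0,1]$ up to negligible slack, keeping all products in the admissible range of Lemma~\ref{lem:multiplication_network}. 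With $\delta_m\asymp\varepsilon/(an)$ the depth of each gadget is $\CO(\log(a\varepsilon^{-1}))$, and composing $n\asymp a^2\log(a\varepsilon^{-1})$ of them via Lemma~\ref{lem:cocatanation} (with the constant-size linear wiring handled by Lemma~\ref{lem:relu_linear_combination}) gives $L(\Phi)\lesssim a^2\log^2(a\varepsilon^{-1})$, $W(\Phi)\le 16$, and $P(\Phi)\lesssim a^2\log^2(a\varepsilon^{-1})$, as claimed.
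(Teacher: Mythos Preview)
Your proof is correct and rests on the same identity as the paper's: with $u=1-t/a\in[0,1-a^{-2}]$ one has $1/t=a^{-1}\sum_{k\ge0}u^k$, and truncating after $n\asymp a^2\log(a/\varepsilon)$ terms (the paper's $r$) gives tail $\le\varepsilon/2$. The only difference is in how the truncated polynomial is realized. The paper invokes the ready-made polynomial network of \cite[Proposition~3.3]{grohs2019deep} (which itself is built from a Horner scheme with multiplication gadgets), obtaining the depth/width/parameter bounds directly and the weight bound $c\vee 8=a^{-1}\vee 8$ from that black box. You instead unroll the Horner recursion by hand, normalizing by $a^{-2}$ so that every intermediate value and every multiplicand stay in $[0,1]$, chaining $n$ copies of the multiplication gadget from Lemma~\ref{lem:multiplication_network}, and undoing the scaling at the end by $\CO(\log a)$ doubling layers. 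Your error propagation (non-expansive Horner, so errors add) is exactly the argument hidden inside the polynomial black box. The two constructions therefore have the same architecture and the same complexity; yours is self-contained at the price of the extra Horner and scale-up bookkeeping, while the paper's is shorter but relies on the external polynomial lemma. One minor point: to make ``$\hat r_j$ stays within $[0,1]$ up to negligible slack'' rigorous you should either clip $\hat r_{j-1}$ to $[0,1]$ before each product (a two-layer ReLU) or instantiate the multiplication gadget on $[-2,2]$ rather than $[-1,1]$; the latter gives gadget weights $\le 8$, which is exactly where the constant $8$ in $\boundParams{\Phi}\le 8\vee a^{-1}$ comes from.
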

\begin{proof}
We follow the proof strategy of \cite[Lemma 3.6]{telgarsky2017neural} but combine it with part c) of Lemma \ref{lem:grohs_summary}.
Set $c = \frac{1}{a}$ and $r = \lceil a^2\ln(\frac{2a}{\varepsilon })\rceil$. First, we notice $t^{-1} = c\sum_{i=1}^{\infty}(1-ct)^i$
so cutting the series at $i=r$ results in the approximation error
\begin{align*}
\SN{\frac{1}{t} - c \sum_{i=1}^{r}(1-ct)^i} = \SN{c\sum_{i=r+1}^{\infty}(1-ct)^i} \leq \frac{\varepsilon}{2}.
\end{align*}
Now let $p(t) = c\sum_{i=1}^{r}z^i$ so that $p(1-ct) = c\sum_{i=1}^{r}(1-ct)^i$ and notice that $0 \leq 1-ct \leq 1$ since $t \in [a^{-1},a]$ and
$c = a^{-1}$. Using part c) of Lemma \ref{lem:grohs_summary}, we can approximate
$p$ over $[0,1]$ to accuracy $\frac{\varepsilon}{2}$ with a network $\Psi$ adhering to the dimension bounds
$L(\Psi) \lesssim r\log(1/\varepsilon) + r^2 + r\log(\lceil c\rceil)$, $W(\Psi)\leq 9$, $P(\Psi) \lesssim L(\Psi)$, and $\boundParams{\Psi} \leq 1$.
Therefore, we get for any $t \in [a^{-1},a]$
\begin{align*}
\SN{\frac{1}{t}-\Psi(1-ct)} \leq \SN{\frac{1}{t}- p(1-ct)} + \SN{p(1-ct) - \Psi(1-ct)}\leq \frac{\varepsilon}{2} + \frac{\varepsilon}{2}\leq\varepsilon.
\end{align*}
We can simplify the bounds on $L(\Psi)$ and thus $P(\Psi)$ by recognizing that
$r^2 \asymp a^4\log^2(a/\varepsilon)$ dominates the terms $r\log(1/\varepsilon)$
and $r\log(\lceil c\rceil)$.
\end{proof}

\begin{lemma}[$L_1$-normalization]
\label{lem:l1_normalization}
Let $a\geq 1$, $\varepsilon \in (0,\frac{1}{2})$. There exists a ReLU network $\Phi : \bbR^D\rightarrow \bbR^D$
with $L(\Phi)\lesssim a^4\log^2\left(\frac{a}{\varepsilon}\right)$, $W(\Phi) \lesssim D$, $P(\Phi)\lesssim a^4 D \log^2\left(\frac{a}{\varepsilon}\right)$,
and $\boundParams{\Phi}\leq 1$ such that
\begin{align*}
\sup_{\frac{1}{a}\leq \N{x}_1 \leq a}\N{\Phi(x) - \frac{x}{\N{x}_1}}_{\infty} \leq \varepsilon.
\end{align*}
\end{lemma}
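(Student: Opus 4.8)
The plan is to exploit that the $\ell_1$-normalization $x \mapsto x/\N{x}_1$ is nothing but the coordinatewise multiplication of the entries $x_i$ by the common scalar $1/\N{x}_1$, which lets me assemble $\Phi$ as a concatenation of three blocks. The first block $\Theta_1 : x \mapsto (x, \N{x}_1)$ computes the $\ell_1$-norm exactly while carrying $x$ along; using $\relu{x_i} + \relu{-x_i} = \SN{x_i}$ and $\relu{x_i} - \relu{-x_i} = x_i$, this is realized by a single hidden layer of width $2D$ with $\CO(D)$ parameters and $\boundParams{\Theta_1} = 1$. The second block applies the division network $\Gamma$ of Lemma \ref{lem:division} to the last coordinate $s = \N{x}_1 \in [\tfrac{1}{a}, a]$ while passing $x$ through unchanged, producing $(x, \Gamma(s))$ with $\Gamma(s) \approx 1/s$. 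The third block is the multiplication network $\Psi$ of Lemma \ref{lem:multiplication_network}, which maps $(x, y) \mapsto (x_1 y, \ldots, x_D y)$, so that $\Phi(x) = \Psi(x, \Gamma(\N{x}_1)) \approx x/\N{x}_1$.

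For the accuracy I would let $\Gamma$ approximate $t \mapsto 1/t$ on $[\tfrac{1}{a}, a]$ to accuracy $\tfrac{\varepsilon}{2a}$ and let $\Psi$ approximate the product to accuracy $\tfrac{\varepsilon}{2}$. Writing $s = \N{x}_1$ and $y = \Gamma(s)$, the triangle inequality splits the per-coordinate error as $\SN{\Psi_i(x, y) - \tfrac{x_i}{s}} \leq \SN{\Psi_i(x, y) - x_i y} + \SN{x_i}\,\SN{y - \tfrac{1}{s}} \leq \tfrac{\varepsilon}{2} + a\cdot\tfrac{\varepsilon}{2a} = \varepsilon$, where I used $\SN{x_i} \leq \N{x}_1 = s \leq a$. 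Both lemmas apply since their domain constraints hold: $s \in [\tfrac{1}{a}, a]$, $\N{x}_{\infty} \leq \N{x}_1 \leq a$, and $\SN{y} \leq \tfrac{1}{s} + \tfrac{\varepsilon}{2a} \leq a + 1$, so I invoke Lemma \ref{lem:multiplication_network} with domain parameter $a + 1 \asymp a$, which preserves all asymptotics and gives $\boundParams{\Psi} \lesssim a^2$.

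The remaining work is complexity bookkeeping via the concatenation and parallelization rules (Lemmas \ref{lem:cocatanation} and \ref{lem:relu_linear_combination}). With these accuracies $\Gamma$ has depth $\CO(a^2\log^2(a/\varepsilon))$, width $\leq 16$, $\CO(a^2\log^2(a/\varepsilon))$ parameters and $\boundParams{\Gamma} \leq 8$, while $\Psi$ has depth $\CO(\log(a^2/\varepsilon))$, width $\leq 12D$, $\CO(D\log(a^2/\varepsilon))$ parameters and $\boundParams{\Psi} \lesssim a^2$. Adding the depths gives $L(\Phi) \lesssim a^2\log^2(a/\varepsilon)$, the widths stay at $\CO(D)$, and both $\boundParams{\Phi} \lesssim a^2$ and the parameter bound follow since $\boundParams{\cdot}$ is the maximum over the blocks. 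The main obstacle — indeed the only non-cosmetic point — is realizing the second block: I must run the scalar division subnetwork on $s$ in parallel with carrying the $D$-dimensional vector $x$ unchanged across the full depth $\CO(a^2\log^2(a/\varepsilon))$ of $\Gamma$. Carrying $x$ layer by layer through its identity realization $x_i = \relu{x_i} - \relu{-x_i}$ and parallelizing with $\Gamma$ via Lemma \ref{lem:relu_linear_combination} keeps the width at $2D + 16 \lesssim D$ but contributes $\CO(D\,a^2\log^2(a/\varepsilon))$ parameters from the carry alone; this term dominates $P(\Phi)$ and yields exactly the claimed $P(\Phi) \lesssim a^2 D \log^2(a/\varepsilon)$.
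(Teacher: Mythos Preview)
Your proposal is correct and follows essentially the same route as the paper: both assemble $\Phi$ from an exact $\ell_1$-norm block, the division network of Lemma~\ref{lem:division} with accuracy $\tfrac{\varepsilon}{2a}$, and the multiplication network of Lemma~\ref{lem:multiplication_network} with accuracy $\tfrac{\varepsilon}{2}$, and both use the same triangle-inequality split $\SN{\Psi_i(x,y)-x_i/s}\le \tfrac{\varepsilon}{2}+\N{x}_\infty\cdot\tfrac{\varepsilon}{2a}\le\varepsilon$. Your treatment of the domain for the multiplication block ($\SN{y}\le a+1$) and your explicit accounting of the $\CO(D\cdot a^2\log^2(a/\varepsilon))$ carry cost for passing $x$ through the depth of $\Gamma$ are in fact slightly more careful than the paper's bookkeeping, but the argument is the same.
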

\begin{proof}
We combine four networks: a network realizing the identity,
a network realizing the $1$-norm, a network realizing approximate division based on Lemma \ref{lem:division},
and  a network realizing approximate multiplication based on Lemma \ref{lem:multiplication_network}.
The identity map $\Id_D : \bbR^D \rightarrow \bbR^D$ can be realized by a two-layer net $\Psi(x) = \relu{x}-\relu{-x}$
and $x \mapsto \N{x}_1$ can be realize by a two-layer ReLU net $\Theta(x) = \sum_{i=1}^{D}\relu{x_i}+\relu{-x_i}$.
Furthermore, let $\Gamma$ denote a ReLU net aproximating univariate division on $[a^{-1},a]$ up to accuracy
$\frac{\varepsilon}{2a}$, whose existence has been shown in Lemma \ref{lem:division}, and let
$\Omega$ denote a ReLU net approximating $(x,y) \mapsto yx$ on $[-2a,2a]^{D+1}$ to accuracy $\frac{\varepsilon}{2}$. Then we set
$\Phi(x) = \Omega(\Psi(x), \Gamma(\Theta(x)))$, which satisfies
\begin{align*}
\N{\Phi(x) - \frac{x}{\N{x}_1}}_{\infty} &\leq \N{\Omega(x, \Gamma(\N{x}_1)) - x \Gamma(\N{x}_1)}_{\infty} + \N{x \Gamma(\N{x}_1) - \frac{x}{\N{x}_1}}_{\infty}\\
&\leq \frac{\varepsilon}{2} + \N{x}_{\infty}\frac{\varepsilon}{2a} \leq \frac{\varepsilon}{2} + \N{x}_{1}\frac{\varepsilon}{2a} \leq \varepsilon,
\end{align*}
where we used $\N{x}_1 \leq a$ in the last inequality. To compute the dimensions of $\Phi$,
first note that the composition rules in Lemma \ref{lem:cocatanation} imply $\boundParams{\Gamma\circ \Theta} = \boundParams{\Gamma} \vee \boundParams{\Theta} \leq 8 \vee a^{-1}$
and
\begin{align*}
L(\Gamma\circ \Theta) &= L(\Theta) + L(\Gamma) \lesssim 2 + a^2\log^2\left(\frac{a}{\varepsilon}\right) \lesssim a^2\log^2\left(\frac{a}{\varepsilon}\right),\\
W(\Gamma\circ \Theta) &= \max\{W(\Theta), W(\Gamma), 2\} \leq 2D\vee 16,\\
P(\Gamma\circ \Theta) &= 2P(\Gamma) + 2P(\Theta) \lesssim a^2\log^2\left(\frac{a}{\varepsilon}\right) + D.
\end{align*}
Then, using linear combination and concanation rules of ReLU nets in Lemma  \ref{lem:cocatanation}, \ref{lem:relu_linear_combination} we obtain
\begin{align*}
L(\Phi) &= L(\Omega) + L((\Psi(x), \Gamma\circ \Theta)) \lesssim \log\left(a^2/\varepsilon\right) + 2 \vee a^4\log^2\left(a/\varepsilon\right) \lesssim a^4\log^2\left(a/\varepsilon\right),\\
W(\Phi) &= W(\Omega) \vee W((\Psi(x), \Gamma\circ \Theta)) \leq 5 D \vee (4 + W(\Psi) + W(\Gamma\circ \Theta))  \lesssim D,\\
P(\Phi) &\lesssim P(\Omega) + P((\Psi(x), \Gamma\circ \Theta)) \lesssim P(\Omega) + P(\Psi) + P(\Gamma\circ \Theta) + L(\Gamma\circ \Theta) + W(\Psi) + W(\Gamma\circ \Theta)\\
&\lesssim D\log(a^2\varepsilon^{-1}) + D + a^4\log^2\left(\frac{a}{\varepsilon}\right) \lesssim a^4 D \log^2 \left(\frac{a}{\varepsilon}\right),\\
\boundParams{\Phi} &= \boundParams{\Omega} \vee \boundParams{(\Psi, \Gamma\circ \Theta)} \leq \max\{1, \boundParams{\Psi}, \boundParams{\Gamma}, \boundParams{\Theta}\}\leq 1.
\end{align*}
\end{proof}

\begin{lemma}
\label{lem:relu_calculus_minimum}
Let $K\geq 2$. There exists a ReLU network $\Phi_K : \bbR^K \rightarrow \bbR$ with $L(\Phi_K) \leq 2\lceil\log_2(K)\rceil$, $W(\Phi_K)\leq 3\lceil K/2\rceil$,
$P(\Phi_K) \leq 11 K\lceil\log_2(K)\rceil$ and $\boundParams{\Phi_K} \leq 1$ such that $\Phi_K(x) = \min_{i \in [K]} x_i$.
\end{lemma}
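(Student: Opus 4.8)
The plan is to realize $\Phi_K$ as a balanced elimination tournament built from the exact two-input minimum gadget, and to bound its size by a direct hand count rather than through the concatenation rules, whose parameter doubling would be far too lossy here.

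First I would record the exact scalar identity $\min(a,b) = a - \relu{a-b} = \relu{a} - \relu{-a} - \relu{a-b}$, valid for all $a,b \in \bbR$. This exhibits a min-of-two network of depth $2$: one ReLU layer of width $3$ carrying the quantities $\relu{a}, \relu{-a}, \relu{a-b}$, followed by the linear read-off with coefficients $+1,-1,-1$. It uses $4+3=7$ nonzero weights, all of magnitude $1$, so its parameter bound is $1$. An unpaired coordinate $c$ is carried to the next round verbatim through $c = \relu{c} - \relu{-c}$ (two units, four $\pm1$ weights).

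Second, I would organize the computation into $T := \lceil \log_2 K\rceil$ rounds. Writing $n_0 = K$ and $n_j = \lceil n_{j-1}/2\rceil$, round $j$ applies the gadget in parallel to $\lfloor n_{j-1}/2\rfloor$ disjoint pairs (plus at most one carry), realizing a map $\bbR^{n_{j-1}} \to \bbR^{n_j}$ of depth $2$ whose ReLU layer has width $3\lceil n_{j-1}/2\rceil$. Since $\min$ is associative and commutative, after $T$ rounds the single surviving coordinate equals $\min_{i\in[K]} x_i$, so correctness is immediate. Stacking the $T$ depth-$2$ rounds into one feedforward network, splicing the linear read-off of each round directly into the weight matrix of the next round's first ReLU layer, yields $L(\Phi_K) \le 2T = 2\lceil\log_2 K\rceil$; this splicing keeps every weight at magnitude $\pm1$ because distinct gadgets act on disjoint coordinate blocks, so no cancellation or amplification of coefficients occurs. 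As the widest ReLU layer is the first one, $W(\Phi_K) \le 3\lceil K/2\rceil$, matching the claim.

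The only quantitative point needing care is the parameter bound, and it is exactly where I would avoid the lossy route through Lemma \ref{lem:cocatanation} (iterated concatenation would force a doubling per round, producing a superlinear blow-up). The key observation is that an elimination tournament on $K$ leaves performs exactly $K-1$ two-input comparisons in total, irrespective of how odd layers are handled, while the number of carry operations is at most one per round, hence at most $T$. Charging $7$ weights to each comparison and $4$ to each carry gives $P(\Phi_K) \le 7(K-1) + 4T \le 7K + 4T$, and since $T \ge 1$ and $K \ge T = \lceil\log_2 K\rceil$ one has $11KT - (7K+4T) = K(11T-7) - 4T \ge 4K - 4T \ge 0$, so $P(\Phi_K) \le 11K\lceil\log_2 K\rceil$ for all $K \ge 2$. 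All weights being $\pm1$ gives $\boundParams{\Phi_K} \le 1$, completing the four bounds. The main obstacle is thus purely careful bookkeeping: maintaining depth $2\lceil\log_2 K\rceil$ and width $3\lceil K/2\rceil$ while counting nonzero parameters directly through the geometric round structure and verifying the weights never leave $\{-1,0,1\}$.
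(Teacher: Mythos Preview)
Your construction is the same pairwise-elimination tournament the paper uses; your two-input gadget is exactly the paper's base case $\Phi_2$, and the paper proceeds by induction via $\Phi_K = \Phi_{K/2}(\Phi_2(x_1,x_2),\ldots,\Phi_2(x_{K-1},x_K))$, invoking the concatenation and parallelization Lemmas~\ref{lem:cocatanation}--\ref{lem:relu_linear_combination} directly. Your assertion that these lemmas would be ``far too lossy'' is mistaken: the doubling $P(\Psi)=2(P(\Phi_1)+P(\Phi_2))$ is exactly offset by the halving of $K$, giving the recursion $P(\Phi_K)\le 2P(\Phi_{K/2})+11K$, which unrolls to $11K\lceil\log_2 K\rceil$ on the nose. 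So the paper does not need your hand count.

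Your own count also has a small slip. After you splice round $j$'s linear read-off into round $j{+}1$'s first ReLU layer, each input to a round-$(j{+}1)$ gadget is a $\pm1$-combination of \emph{three} previous ReLU units, so the composed weight rows for $(a,-a,a-b)$ carry $3+3+6=12$ nonzeros per gadget rather than $7$. Concretely for $K=4$ the spliced net has $8+12+3=23$ nonzeros, not $7\cdot 3=21$. The true count is roughly $8K$, still well under $11K\lceil\log_2 K\rceil$, so your conclusion stands; but the ``$7$ per comparison'' charge is not correct once splicing is performed. (Splicing also gives depth $T+1$, not $2T$; your stated bound is valid but loose.)
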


\begin{proof}
Without loss of generality we assume $K$ is even as we can otherwise just replace $x$ by repeating
one of its arguments without changing the bounds on the dimension of the network.
We proof the statement by induction. For $K = 2$ define a network
$$\Phi_2(x) = \relu{x_1} - \relu{-x_1} - \relu{x_1 - x_2} = x_1 - \relu{x_1 - x_2} = x_1 \wedge x_2.$$ Clearly,
$L(\Phi_2) = 2$, $W(\Phi_2) = 3$, $P(\Phi_2) = 7$, and $\boundParams{\Phi_2} = 1$, which proves
the induction start. For the induction step $(K-1)\rightarrow K$ we assume the statement holds up to $K-1$ and we set
$\Phi_K = \Phi_{\frac{K}{2}}(\Phi_2(x_1,x_2),\ldots,\Phi_{2}(x_{K-1},x_{K}))$, which
realizes $\min_{x \in [K]}x_i$. To compute the network complexity we use composition and
parallelization rules from Lemma \ref{lem:cocatanation}, \ref{lem:relu_linear_combination}. This gives
$\boundParams{\Phi_K} \leq 1$ and
\begin{align*}
L(\Phi_K) &= L\left(\Phi_{\frac{K}{2}}\right) + L(\Phi_2) = 2\left\lceil\log_2\left(\frac{K}{2}\right)\right\rceil + 2
=2\left\lceil\log_2\left(K\right) - 1\right\rceil + 2 =2\left\lceil\log_2\left(K\right)\right\rceil,\\
W(\Phi_K) &= \max\left\{W(\Phi_{\frac{K}{2}}, W(\Phi_2,\ldots,\Phi_{2}), K\right\}\leq \frac{K}{2}W(\Phi_{2})\leq 3\frac{K}{2},\\
P(\Phi_K) &= 2P\left(\Phi_\frac{K}{2}\right) + 2 P(\Phi_2,\ldots,\Phi_{2}) \leq 11K\left\lceil\log_2\left(\frac{K}{2}\right)\right\rceil + K(P(\Phi_2) + W(\Phi_2) + 1)\\
&\leq 11K\left\lceil \log_2\left(K\right)\right\rceil - 11K + 11K.
\end{align*}
\end{proof}

\begin{remark}
\label{rem:L_infty_appprox}
The $L_{\infty}$-norm can be realized by a ReLU net due to
Lemma \ref{lem:relu_calculus_minimum} and the identity
\begin{align*}
\N{x}_{\infty} = \max_{i \in [D]}\SN{x_i} = \max_{i \in [D]}\relu{x_i} + \relu{-x_i} = -\min_{i \in [D]}-(\relu{x_i} + \relu{-x_i}),
\end{align*}
\end{remark}

\section*{Funding}
\noindent
AC is supported by NSF DMS grants 1819222 and 2012266, and by Russell Sage Foundation Grant 2196.


\end{document}